\def\eqref#1{(\ref{#1})}
\def\1{\bm{1}}
\DeclareMathAlphabet{\mathsfit}{\encodingdefault}{\sfdefault}{m}{sl}
\SetMathAlphabet{\mathsfit}{bold}{\encodingdefault}{\sfdefault}{bx}{n}
\newcommand{\R}{\mathbb{R}}
\DeclareMathOperator*{\argmax}{arg\,max}
\definecolor{RWTHBlue}{rgb}{0, 0.32941176470588235, 0.6235294117647059}
\definecolor{RWTHBordeaux}{rgb}{0.6313725490196078, 0.06274509803921569, 0.20784313725490197}
\definecolor{RWTHRed}{rgb}{0.8,0.027450980392156862,0.11764705882352941}
\definecolor{RWTHOrange}{rgb}{0.9647058823529412, 0.6588235294117647, 0}
\definecolor{RWTHGreen}{rgb}{0.3411764705882353, 0.6705882352941176, 0.15294117647058825}
\newtheorem{theorem}{Theorem}
\newtheorem{corollary}{Corollary}
\newtheorem{lemma}{Lemma}
\newtheorem{definition}{Definition}
\newtheorem{assumption}{Assumption}
\newtheorem{remark}{Remark}
\newtheorem{rep@theorem}{\rep@title}
\newcommand{\newreptheorem}[2]{%
  \newenvironment{rep#1}[1]{%
    \def\rep@title{Restated #2 \ref{##1}}%
    \setcounter{rep@theorem}{0}%
    \renewcommand\therep@theorem{}% Suppresses numbering
    \begin{rep@theorem}}%
    {\end{rep@theorem}}}
\newtheorem{rep@lemma}{\rep@title}
\newcommand{\newreplemma}[2]{%
  \newenvironment{rep#1}[1]{%
    \def\rep@title{Restated #2 \ref{##1}}%
    \setcounter{rep@lemma}{0}%
    \renewcommand\therep@lemma{}% Suppresses numbering
    \begin{rep@lemma}}%
    {\end{rep@lemma}}}
\newenvironment{proof}[1][Proof]{\paragraph{#1:}}{\hfill$\square$}
\newcommand{\pb}[1]{{\color{blue!70!black} [{\bf PB:} #1]}}
\newcommand{\pbtodo}[1]{{\color{red!80!black} [{\bf PB-todo:} #1]}}
\newcommand{\todo}[1]{{\color{red!80!black} [{\bf Todo:} #1]}}
\newcommand{\drafttext}[1]{{\color{white!60!black} {\bf Text:} #1}}
\newcommand{\pb}[1]{}
\newcommand{\pbtodo}[1]{}
\newcommand{\todo}[1]{}
\newcommand{\drafttext}[1]{}
\newcommand{\feedback}[1]{\noindent \textcolor{magenta!80!black}{{\small \noindent \textbf{Comment:} #1}} \medskip}
\newcommand{\feedback}[1]{}
\newcommand{\ie}{i\/.\/e\/.\/,~}
\newcommand{\eg}{e\/.\/g\/.\/,~}
\newcommand{\cf}{cf\/.\/~}
\newcommand{\iid}{i\/.i\/.d\/.\/~}
\newcommand{\fig}{Figure~}
\newcommand{\sect}{Section~}
\newcommand{\defn}{Definition~}
\newcommand{\theo}{Theorem~}
\newcommand{\N}{\mathbb{N}}
\newcommand{\X}{\mathbb{X}}
\newcommand{\algname}{ET-GP-UCB\xspace}
\newcommand{\GP}{\mathcal{GP}}
\newcommand{\optvar}{\bm{x}}
\newcommand{\noisevar}{\sigma_{\mathrm{n}}}
\newcommand{\stopT}{\tau}
\newcommand{\triggerT}{{t_r}}
\newcommand{\triggerDelta}{\delta_{\mathrm{B}}}
\newcommand{\bigO}{\tilde{\mathcal{O}}}
\DeclareRobustCommand{\ubar}[1]{\underaccent{\bar}{#1}}
\newlist{mylist}{enumerate*}{1}
\setlist[mylist]{label=(\roman*), font=\itshape}
\newacronym{tvbo}{TVBO}{time-varying Bayesian optimization}
\newacronym{gp}{GP}{Gaussian process}
\newacronym{rkhs}{RKHS}{Reproducing Kernel Hilbert Space}
\newacronym{bo}{BO}{Bayesian optimization}
\newacronym{se}{SE}{squared-exponential}
\title{Event-Triggered Time-Varying Bayesian Optimization}
\author{\\
\name Paul Brunzema \email paul.brunzema@dsme.rwth-aachen.de \\
      \addr Institute for Data Science in Mechanical Engineering\\
      RWTH Aachen University
      \AND
\name Alexander von Rohr \email vonrohr@dsme.rwth-aachen.de \\
      \addr Institute for Data Science in Mechanical Engineering\\
      RWTH Aachen University
      \AND
\name Friedrich Solowjow \email friedrich.solowjow@dsme.rwth-aachen.de \\
      \addr Institute for Data Science in Mechanical Engineering\\
      RWTH Aachen University
      \AND
\name Sebastian Trimpe \email trimpe@dsme.rwth-aachen.de \\
      \addr Institute for Data Science in Mechanical Engineering\\
      RWTH Aachen University
      }
\begin{document}

\maketitle

\begin{abstract}
%-------------
We consider the problem of sequentially optimizing a time-varying objective function using \gls{tvbo}.
% Real world problem/motivation/gap in knowledge
Current approaches to \gls{tvbo} require prior knowledge of a constant rate of change to cope with stale data arising from time variations.
However, in practice, the rate of change is usually unknown.
% Key focus / contribution / 
We propose an event-triggered algorithm, \algname, that treats the optimization problem as static until it detects changes in the objective function and then resets the dataset.
This allows the algorithm to adapt online to realized temporal changes without the need for exact prior knowledge.
% Key steps of approach / findings
The event trigger is based on probabilistic uniform error bounds used in Gaussian process regression.
% Performance / application / limitation
We derive regret bounds for adaptive resets without exact prior knowledge of the temporal changes and show in numerical experiments that ET-GP-UCB outperforms competing GP-UCB algorithms on both synthetic and real-world data.
The results demonstrate that ET-GP-UCB is readily applicable without extensive hyperparameter tuning.
\end{abstract}

\glsresetall
\section{Introduction}\label{sec:intro}

Over the last two decades, \gls{bo} has emerged as a powerful method for sequential decision-making and design of experiments under uncertainty \citep{garnett2010bayesian, snoek2012practical,calandra2016bayesian, frazier2016bayesian, chen2018bayesian, neumann2019data, griffiths2020constrained, colliandre2023bayesian}.
These problems are typically formalized as optimization problems of a \emph{static} unknown objective function.
At the core of \gls{bo} is the exploration-exploitation trade-off, where the decision maker weights the potential benefit of a new unexplored query against the known reward of the currently assumed optimum.
The performance of \gls{bo} algorithms is typically measured in terms of regret, \ie the difference between the actual decision taken and the (unknown) optimal one.
If one assumes a static objective function, there exist algorithms that achieve a desirable sub-linear regret (see \citet{Garnett2022} for an overview), meaning they will efficiently converge to the globally optimal solution.

In the real world, however, objective functions are often \emph{time-varying}.
Time-varying objective functions are widespread across various domains, including economics due to fluctuations in price or supply and demand \citep{primiceri2005time,dangl2012predictive}, environmental science \citep{marchant2012bayesian, marchant2014sequential}, and control systems due to changing system dynamics \citep{aastrom1995adaptive, ellis2014economic, colombino2019online,simonetto2020time, Koenig2021, Brunzema2022}.
A time-varying objective function alters the nature of the exploration-exploitation trade-off.
Time affects the optimal decision and the information a decision maker gains from each query, and any collected data set becomes stale.
Generally, in a setting of ongoing changes and without further assumptions, we cannot expect to find the optimal query, and conversely, no algorithm can achieve sub-linear regret \citep[Proposition~1]{Besbes2015}.
Instead, our goal is to design algorithms with an explicit relation between regret and the rate of change $\varepsilon$, which quantifies how fast the objective function varies.
\begin{figure}[t]
    \centering
    \includegraphics[]{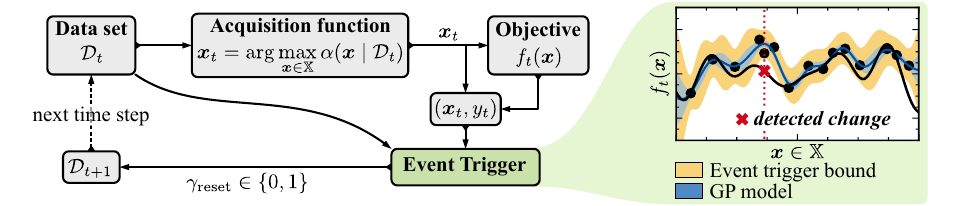}
    \caption{\textit{Illustration of our proposed concept.}
    After the standard \gls{bo} steps of optimizing the acquisition function and obtaining an observation, an \textit{event trigger} decides whether to reset the data set ($\gamma_{\mathrm{reset}} = 1$) if a change is detected (see red cross), or augment it with the observed data point in the usual way ($\gamma_{\mathrm{reset}} = 0$). This event trigger allows our algorithm \algname to be adaptive to changes in the objective function without relying on exact prior knowledge of the rate of change.}
\label{fig:alg_overview}
\end{figure}
In their seminal work, \citet{Bogunovic2016} proposed two algorithms for the \gls{tvbo} problem;
however, the proposed algorithms and their subsequent variants rely on prior knowledge of the rate of change $\varepsilon$ as a hyperparameter.
In typical BO applications, where the objective is unknown, knowledge of its rate of change $\varepsilon$ is hard to obtain.
Moreover, in these algorithms, $\varepsilon$ is specified at the start and assumed to remain constant over time, which is rarely the case in practice.

To address these limitations, we propose the novel concept of event-triggered \gls{tvbo} (see \fig\ref{fig:alg_overview}).
We model the objective function as time-invariant until an event trigger detects a significant deviation of the observed data from the current model.
The algorithm then reacts to this change by resetting the data set.
This way, the time-varying nature of the problem is considered in the algorithm's exploration-exploitation trade-off \emph{only when necessary}.
The event trigger replaces previously required exact prior knowledge of a rate of change, \algname exhibits strong empirical performance for different time variations, including gradual, sudden, and no change, and desirable theoretical properties under common regularity assumptions and upper and lower bounds on the rate of change.
In summary, our contributions are as follows:
\begin{compactenum}[\itshape(i)]
    \item A conceptually new event-triggered \gls{bo} algorithm for \gls{tvbo};
    \item Bayesian regret bounds for event-triggered \gls{bo} for an unknown but bounded rate of change;
    \item Empirical evaluations of \algname on synthetic data and real-world benchmarks.
\end{compactenum}
\section{Problem Setting}\label{sec:problem}

We aim to sequentially optimize an unknown time-varying objective function $f_t \colon \X \to \R$ on a compact set $\X \subset \R^d$ over the discrete time steps $t \in \mathbb{I}_{1:T} \coloneqq \{1,\dots, T\}$ up to the time horizon $T$ as
\begin{equation}
	\optvar_t^* = \argmax_{\optvar \in \X} f_t(\optvar). \label{eq:optimization_problem}
\end{equation}
Here and in the following, subscript $t$ denotes time dependency.
At each time step, an algorithm chooses a query $\optvar_t \in \X$ and obtains noisy observations following the standard assumption in \gls{bo}. % zeroth-order feedback.

\begin{assumption}\label{ass:noisy_obs}
    Observations $y_t = f_t(\optvar_t) + w_t$ are perturbed by independent and identically distributed (i.i.d.) Gaussian noise with noise variance $\noisevar^2$ as $w_t \sim \mathcal{N}(0, \noisevar^2)$.
\end{assumption}

Without assumptions on the temporal change, the problem is intractable.
We impose the Markov chain model proposed by \citet{Bogunovic2016} and used in various prior applications (see Sec.~\ref{sec:related_work}).
\begin{assumption}[\citet{Bogunovic2016}]\label{ass:markov_model}
Let $\GP\left(\mu, k(\optvar, \optvar')\right)$ be a \gls{gp} with mean function $\mu : \X \to \R$ and kernel $k \colon \X \times \X \to \R$.
	Given \iid samples $g_1, g_2, \dots$ with $g_i \sim \GP\left(0, k(\optvar, \optvar')\right)$, where $k(\optvar, \optvar')$ is a stationary kernel with $k(\optvar, \optvar') \leq 1$ for all $\optvar, \optvar' \in \R$, and a rate of change $\varepsilon \in [0, 1]$, the time-varying objective function $f_t(\optvar)$ follows the Markov chain:
	\begin{equation}
   		f_t(\optvar) = \begin{cases}
		g_t(\optvar)\, , & t = 1 \\
		\sqrt{1-\varepsilon} f_{t-1} (\optvar) + \sqrt{\varepsilon} g_t(\optvar)\, , & t \geq 2.
		\end{cases} 
	\end{equation}
\end{assumption}
\begin{remark}\label{rem:constant_prior}
Due to Assumption~\ref{ass:markov_model}, any choice of $\varepsilon \in [0,1]$ yields that $f_t \sim \GP\left(0, k\right)$ for all $t\in \mathbb{I}_{1:T}$.
\end{remark}
We further use standard smoothness assumptions on the kernel choice introduced in \citet{Srinivas2010} as well as \citet{Bogunovic2016}.
\begin{assumption}\label{ass:smoothness_assumption}
The kernel $k(\optvar, \optvar')$ is such that, given $f\sim \GP\left(0, k(\optvar, \optvar')\right)$, $L, L_f \geq 1$, and $a_0, b_0, a_1, b_1 \geq 0$, the following holds:
\begin{align}
	\mathbb{P}\left\{\sup_{\optvar\in \X}\left\lvert f(\optvar)\right\rvert > L_f \right\} \leq a_0 e^{- \left(L_f/b_0\right)^2} \,\, \text{ and } \,\,
    \mathbb{P}\left\{\sup_{\optvar\in \X}\left\lvert \frac{\partial f(\optvar)}{\partial x_j} \right\rvert > L \right\} \leq a_1 e^{- \left(L/b_1\right)^2} , j=1, ..., d.
\end{align}%
\end{assumption}
This assumption limits the choice of kernel to those that are at least four times differentiable \citep[Theorem~5]{Ghosal2006}, such as the popular squared exponential (SE) kernel. 

The performance of an algorithm optimizing \eqref{eq:optimization_problem} is measured in terms of the cumulative regret $R_T$.
\begin{definition}%[Regret]
    Let $\optvar_t^*$ be the maximizer of $f_t(\optvar)$, and $\optvar_t \in \X$ be the query at time step $t$. Then, the instantaneous regret at $t$ is $r_t = f_t(\optvar_t^*) - f_t(\optvar_t)$, and the regret after $T$ time steps is $R_T = \sum_{t=1}^T r_t$.\label{def:regret}
\end{definition}
Sub-linear regret is defined as $\lim_{T \to \infty} R_T / T = 0$.
In the time-invariant setting ($\varepsilon=0$), algorithms can achieve such sub-linear regret, \eg GP-UCB \citep[][Theorem~2]{Srinivas2010} with regret of order $\bigO(\sqrt{T})$\footnotemark.
\footnotetext{As in \citet{Bogunovic2016} and \citet{Srinivas2010}, we denote asymptotics up to log factors as $\bigO(\cdot)$.}%
For a time-varying objective function following Assumption~\ref{ass:markov_model}, \citet[Theorem~4.1]{Bogunovic2016} shows that the lower bound on the regret of an algorithm is linear as $\Omega(T \varepsilon)$.

\paragraph{Problem Statement}
We consider the time-varying optimization problem \eqref{eq:optimization_problem} under Assumptions~\ref{ass:noisy_obs}--\ref{ass:smoothness_assumption}.
To develop a practical algorithm---and in contrast to the state-of-the-art in \gls{tvbo}---we assume the exact rate of change $\varepsilon$ to be \emph{unknown}.
According to prior work \citep{Bogunovic2016}, it is reasonable to demand regret bounds that are no more than linear in $T$ and explicitly dependent on the true $\varepsilon$.
\section{Related Work and Background} \label{sec:related_work}

We build on prior work in \gls{bo} and \gls{tvbo}.
This section gives an overview of these two fields.

\paragraph{Bayesian Optimization}
Our event trigger design and algorithm extends prior work by \citet{Srinivas2010} on the \gls{bo} algorithm GP-UCB and transfers some of their concepts to the time-varying setting.
\citet{Srinivas2010} proved sub-linear regret bounds when using an upper confidence bound (UCB) acquisition function of the form $\mu_{\mathcal{D}_{t}}(\optvar) + \sqrt{\beta}_t \sigma_{\mathcal{D}_{t}}(\optvar)$, with standard posterior updates of the \gls{gp} model, given a dataset $\mathcal{D}_t\coloneqq\left\{(\optvar_i,y_i)\right\}_{i=1}^{t-1}$, as
\begin{align}
    \underbracket{\mu_{\mathcal{D}_{t}}(\optvar) = \bm{k}_t(\optvar)^T \left( \textbf{K}_t + \noisevar^2 \textbf{I} \right)^{-1} \bm{y}_t}_{\text{time-invariant posterior mean}} \text{ and }
    \underbracket{\sigma_{\mathcal{D}_{t}}^2(\optvar) = k(\optvar,\optvar) - \bm{k}_t(\optvar)^T \left( \textbf{K}_t + \noisevar^2 \textbf{I} \right)^{-1} \bm{k}_t(\optvar)}_{\text{time-invariant posterior variance}} \label{eq:var}
\end{align}
where $\textbf{K}_t = [k(\optvar_i, \optvar_j)]_{i, j = 1}^{t-1}$ is the Gram matrix, $\bm{k}_t(\optvar)= [k(\optvar_i, \optvar)]_{i=1}^{t-1}$, and noisy measurements $\bm{y}_t=[y_1, \dots, y_{t-1}]^T$ follow Assumption~\ref{ass:noisy_obs}.
Our algorithm uses these standard posterior updates only as long as all the measurements comply with the current posterior model; it resets the dataset if this is no longer the case.
Consequently, in contrast to GP-UCB, our algorithm can adapt to changes in the objective function by discarding stale data.
For a detailed overview of other standard \gls{bo} algorithms as well as the standard \gls{bo} framework, we refer to \citet{Garnett2022}.

\paragraph{Time-Varying Bayesian Optimization}
\gls{tvbo} can be considered a special case of contextual BO~\citep{Krause2011}, with the key difference that time is treated as a strictly increasing variable.
The inclusion of time as an input for spatial-temporal monitoring within the \gls{bo} framework was explored by \citet{marchant2012bayesian} and \cite{marchant2014sequential}.
Building on spatio-temporal \glspl{gp}, \citet{Bogunovic2016} introduced two time-varying UCB algorithms tailored for TVBO, where they suggest that changes in the objective function can be effectively captured using the model outlined in Assumption~\ref{ass:markov_model}.
Embedding this assumption into the GP surrogate model yields the following posterior updates
\begin{align}
    \underbracket{\tilde{\mu}_{\mathcal{D}_{t}}(\optvar) = \tilde{\bm{k}}_t(\optvar)^T \left( \tilde{\textbf{K}}_t + \noisevar^2 \textbf{I} \right)^{-1} \bm{y}_t}_{\text{time-varying posterior mean}} \text{ and }
    \underbracket{\tilde{\sigma}_{\mathcal{D}_{t}}^2(\optvar) = k(\optvar,\optvar) - \tilde{\bm{k}}_t(\optvar)^T \left( \tilde{\textbf{K}}_t + \noisevar^2 \textbf{I} \right)^{-1}\tilde{\bm{k}}_t(\optvar)}_{\text{time-varying posterior variance}}
    \label{eq:var_time}
\end{align}
where $\tilde{\textbf{K}}_t = \textbf{K}_t \circ \textbf{K}^{\text{time}}_t$ with $\textbf{K}^{\text{time}}_t = [(1-\varepsilon)^{|i - j|/2}]_{i,j=1}^{t-1}$, and $\tilde{\bm{k}}_t(\optvar) = \bm{k}_t(\optvar) \circ \bm{k}^{\text{time}}_t$ with $\bm{k}^{\text{time}}_t = [(1-\varepsilon)^{|t+1 - i|/2}]_{i=1}^{t-1}$, and $\circ$ is the Hadamard product.
Using updates following \eqref{eq:var_time} ensures that the posterior distribution follows the same dynamics as $f_t$.
We will utilize this in the proof of regret bounds for algorithms without exact knowledge of $\varepsilon$ under the mentioned regularity assumptions.

Since the introduction of this setting by \citet{Bogunovic2016}, their algorithm TV-GP-UCB with posterior updates as in \eqref{eq:var_time} has been used in various applications, such as controller learning \citep{Su2018}, safe adaptive control \citep{Koenig2021}, and hyperparameter optimization for reinforcement learning \citep{ParkerHolder2020, ParkerHolder2021}.
Similarly, \cite{Brunzema2022} use time-varying posterior updates in their algorithm UI-TVBO but with a different temporal kernel.
Both approaches rely on a priori estimation of the rate of change and assume that it stays constant over time.
In contrast, the event trigger in \algname does not need an estimate of the rate of change and enables our algorithm to adapt to changes in the objective.
We note that if the rate of change is known beforehand and the structural assumptions are satisfied, the TV-GP-UCB is an efficient choice since these properties are explicitly embedded in the algorithm.

\citet{Bogunovic2016} further introduced R-GP-UCB, which resets the data set periodically to account for data becoming uninformative over time.
In essence, \algname is an adaptive version of R-GP-UCB and resets only if an obtained measurement is no longer consistent with the current dataset (\cf Sec.~\ref{sec:method}).
As in R-GP-UCB, our algorithm uses time-invariant posterior updates within time intervals, but our time intervals do not have to be specified a priori based on $\varepsilon$; 
rather, they are determined online by the event trigger.
% as well as lower and upper bounds on $\varepsilon$.

Under frequentist regularity assumption, \citet{Zhou2021} also propose an algorithm with a constant reset time, BR-GP-UCB\footnote{
%\citet{Zhou2021} also call their algorithm R-GP-UCB.
We label it BR-GP-UCB to avoid confusion with R-GP-UCB of \citet{Bogunovic2016}.}, as well as a sliding window algorithm.
The crucial difference to our problem setting is their assumption of a \emph{fixed variational budget}, \ie $\sum_{t=1}^T ||f_{t+1} - f_t||_\mathcal{H} < B_T$, where $\mathcal{H}$ is a \gls{rkhs} and $B_T$ a scalar.
Leveraging these same assumptions, \citet{Deng2022} propose an algorithm, W-GP-UCB, which utilizes a similar weighted \gls{gp} model to TV-GP-UCB.
In this setting of a fixed variational budget, sub-linear regret is possible \citep{Besbes2015}.
In contrast, we are interested in the setting given by Assumption~\ref{ass:markov_model}, which corresponds to a variational budget that increases linearly in time; hence, the best possible regret is also linear \citep[Proposition~1]{Besbes2015}.
Nevertheless, in Appendix~\ref{sec:dengetal}, we empirically compare ET-GP-UCB against the algorithms of \citet{Zhou2021} and \citet{Deng2022} and display improved performance on their benchmarks with less prior information.

\paragraph{Event-Triggered Learning}
The idea of resetting a dataset only if an important change occurs as detected by a trigger event is similar to the concepts of event-triggered learning \citep{Solowjow2020} and event-triggered online learning \citep{Umlauft2019}.
In the multi-armed bandit setting, \cite{wei2021non} discuss the idea of including tests into optimization algorithms to detect and adapt to non-stationarity.
Our algorithm \algname is the first to introduce event triggers to BO and specifically \gls{tvbo}. 
Since then, this idea has been taken up for a different BO problem, namely safe BO, in recent work \citep{holzapfel2024event}.
The event-triggered concepts aim to be efficient with the available data, only updating a model when necessary or only adding new data that is relevant.
\algname also aims for data efficiency; as long as we have a good model of our objective function, we want to use already available data to minimize regret.
If the event trigger detects a significant model mismatch, we reset the dataset and thereby delete stale data.

\paragraph{Dynamic and Non-Stationary Multi-Armed Bandits}
Related problems to \eqref{eq:optimization_problem} have also been considered in the non-stationary multi-armed bandit (MAB) literature.
The key distinction to our problem setting is that for MABs the number of arms is finite and they are not correlated.
Similar to our setting with ongoing changes, \citet{whittle1988restless} introduce restless MABs; here, the state of an arm can change in each step according to a known (yet arbitrary) stochastic transition function \citep{slivkins2008adapting}.
Such restless MABs are known to be intractable \citep{papadimitriou1994complexity,slivkins2008adapting}.
For restless bandits with a ``time-variation parameter'' greater zero linear regret as $\Omega(T)$ is inevitable \cite[Theorem~1.3]{slivkins2008adapting}.
The setting in \citet{slivkins2008adapting} is, in its effects, similar to the regularity assumption on the temporal changes in Assumption~\ref{ass:markov_model} with a corresponding lower bound on the regret of $\Omega (T \varepsilon)$ \citep[Theorem~4.1]{Bogunovic2016}.
Also related are MABs under a fixed number of abrupt distribution shifts \citep{auer2019adaptively,abbasi2023new}.
For these problems, adaptive resetting algorithms have been proposed \citep{chen2019new,wei2021non,suk2022tracking}.
In this more structured problem setting, similar to the frequentist regularity assumptions discussed above, sub-linear regret is possible.
\section{Regret Bounds for TVBO with Adaptive Resets} \label{sec:theory}

The core goal of this work is to develop an algorithm for an unknown rate of change. To achieve this goal, we first generalize the results by \citet{Bogunovic2016} and derive regret bounds for a BO algorithm with variable resets. We analyze the general problem for all strategies that reset anytime between a lower ($\ubar{N}$) and upper ($\bar{N}$) reset threshold.
Our analysis reveals that for any resetting strategy, given reasonable $\ubar{N}$ and $\bar{N}$, linear regret with explicit dependency on the true $\varepsilon$ is guaranteed, even if the exact $\varepsilon$ is unknown to the resetting algorithm.
This is in contrast to previous work on TVBO, where the true $\varepsilon$ does not only appear in the regret bounds but needs to be known to the algorithm or GP model.
We first introduce a theorem applicable to all algorithms with such a general resetting strategy, and then apply it to specific kernels to derive scaling laws.
In Sec.~\ref{sec:method}, we will propose a specific event trigger as a resetting strategy.

\begin{theorem}\label{theorem:regret_bounds}
    Let the domain $\X \subset [0, r]^d \subset \R^d$ be convex and compact with $d\in \mathbb{N}_+$ and let $f_t$ follow Assumption~\ref{ass:markov_model} with a kernel $k(\optvar,\optvar')$ such that Assumption~\ref{ass:smoothness_assumption} is satisfied. Pick $\delta \in (0,1)$ and set 
    $\beta_t = 2 \ln\left( 2\pi^2 t^2 / (3\delta)\right) + 2 d \ln{(t^2 d b_1 r \sqrt{\ln (2d a_1 \pi^2 t^2 / (3\delta))})}$.
    Pick a lower block length $\ubar{N} \in \mathbb{N}_+$ and a upper block length $\bar{N} \in \mathbb{N}_+$ with $\ubar{N} \leq \bar{N} \leq T$.
    Let $\gamma_{\bar{N}}$ be the maximum information gain over $\bar{N}$ points.
    Then, running any algorithm with block sizes between $\ubar{N}$ and $\bar{N}$ satisfies
	\begin{align}
		R_T \leq \sqrt{C_1 T \beta_T \left( \frac{T}{\ubar{N}} + 1\right)\gamma_{\bar{N}}} + 2 + T\phi_T(\varepsilon, \bar{N})
	\end{align}
	with probability at least $1 - \delta$, where ${C_1 = 8/\ln(1+\noisevar^{-2})}$ and we defined
	\begin{align}
 \textstyle
		\phi_T(\varepsilon, \bar{N}) \coloneqq  2 \sqrt{\beta_T (3\noisevar^{-2} + \noisevar^{-4}) \bar{N} ^3 \varepsilon} + 2 (\noisevar^{-2} + \noisevar^{-4}) \bar{N}^3 \varepsilon (b_0 + \sqrt{2} \noisevar) \sqrt{\ln\frac{4(a_0 + 1) \pi^2 T^2}{3\delta}}. 
	\end{align}
\end{theorem}
\begin{proof}[Sketch of proof]
We extend the analysis of R-GP-UCB to a setting in which resets are not enforced periodically but can occur at any time step between $\ubar{N}$ and $\bar{N}$.
Building on Assumptions~\ref{ass:noisy_obs} to \ref{ass:markov_model}, we use the upper threshold $\ubar{N}$ to bound the difference between the model used in the algorithm (time-invariant model, see \eqref{eq:var}) and the time-varying model in \eqref{eq:var_time}.
We use the lower threshold $\bar{N}$ to bound the maximum number of resets over the time horizon.
The full proof is in Appendix~\ref{sec:proof}.
\end{proof}

We next apply this theorem to specific kernels.
We specify how $\ubar{N}$ and $\bar{N}$ scale with upper and lower bounds on the rate of change to obtain the desirable scaling of the regret.

\begin{corollary}\label{coro:scaling}
    Let the assumptions in Theorem~\ref{theorem:regret_bounds} hold. Given a lower and upper bound on the true $\varepsilon$ such that $\varepsilon \in [\ubar{\varepsilon}, \bar{\varepsilon}] \subset (0, 1)$ there exist $\lambda_1 \in (0, 1]$ and $\lambda_2 \in [1, \frac{1}{\varepsilon})$ such that $\ubar{\varepsilon} = \lambda_1 \cdot \varepsilon$ and $\bar{\varepsilon} = \lambda_2 \cdot \varepsilon$.
    For the \underline{SE kernel}, set $\ubar{N} = \Theta(\min \{T, \bar{\varepsilon}^{-1/4}\})$ and $\bar{N} = \Theta(\min \{T, \ubar{\varepsilon}^{-1/4}\})$. Then, with probability $1-\delta$, we have
    \begin{align}
        R_T &= \bigO\left(\max\left\{  \sqrt{T}, \, \lambda_2^{1/8} \varepsilon^{1/8}_{\phantom{1}}T, \, \lambda_1^{-3/8}\varepsilon^{1/8}_{\phantom{1}} T, \, \lambda_1^{-3/4}\varepsilon^{1/4}_{\phantom{1}}T\right\} \right). 
    \end{align}
    For the \underline{Mat\`ern kernel ($\nu > 2$)}, define $\xi = \frac{d(d+1)}{2\nu + d(d+1)}$ and set $\ubar{N} = \Theta(\min \{T, \bar{\varepsilon}^{-1/{(4-\xi)}}\})$ and $\bar{N} = \Theta(\min \{T, \ubar{\varepsilon}^{-1/(4-\xi)}\})$. Then, with probability $1-\delta$, we have
    \begin{align}
        R_T &= \bigO\left(\max\left\{  \sqrt{T^{1-\xi}}, \, \lambda_1^{-\frac{\xi}{2(4-\xi)}}\lambda_2^{\frac{1}{2(4-\xi)}} \varepsilon^{\frac{1-\xi}{2(4-\xi)}}_{\phantom{1}}T, \, \lambda_1^{-\frac{3}{2(4-\xi)}} \varepsilon^{\frac{1-\xi}{2(4-\xi)}}_{\phantom{1}}T, \, \lambda_1^{-\frac{3}{4-\xi}}\varepsilon^{\frac{1-\xi}{4-\xi}}_{\phantom{1}}T\right\} \right).
    \end{align}
\end{corollary}
From Corollary~\ref{coro:scaling}, we can derive the following insights:
\textit{(i)} It is possible to design algorithms with regret explicitly depending on $\varepsilon$, without knowing its exact value.
\textit{(ii)} A larger interval for $\varepsilon$, \ie decreasing $\lambda_1$ or increasing $\lambda_2$, results in a worse scaling of the regret bound.
It essentially is the price one has to pay for not knowing the exact value of $\varepsilon$.
\textit{(iii)} For $\varepsilon = \ubar{\varepsilon} = \bar{\varepsilon}$, hence $\lambda_1=\lambda_2 = 1$, the scaling of R-GP-UCB is recovered.
From Theorem~\ref{theorem:regret_bounds}, we can further observe that for $\varepsilon = 0$ the bounds recover the sub-linear scaling of GP-UCB iff an appropriate lower bound on $\varepsilon$ is specified such that $\ubar{N}=T$ which follows intuition as no resets should should take place.

The above results demonstrate that it is possible to design algorithms that do not require the exact rate of change; as long as lower and upper bounds on $\varepsilon$ are known and resets occur within $[\bar{N}, \ubar{N}]$ the algorithm has bounded regret.
Crucially, we did not actually specify \emph{when} to reset, thus our analysis includes random and ``worst case'' resets.
Next, we utilize the flexibility in choosing the reset timing within $[\bar{N}, \ubar{N}]$ to achieve better empirical regret, \eg as compared to periodic resets.
\section{An Event Trigger for Adaptive Resets in TVBO}\label{sec:method}

We will now introduce our algorithm \algname.
It will decide \emph{when} to reset through its event trigger (see \fig\ref{fig:alg_overview}).
We build on established uniform error bounds for \gls{gp} regression to decide if new data is consistent with the posterior.
When new data deviates significantly, we reset the dataset. 
This idea constitutes a new approach compared to previous work in \gls{tvbo}, as the resulting algorithm does not rely on the true rate of change $\varepsilon$ as a hyperparameter.
The core component of our algorithm is the event trigger:

\begin{definition}[Event trigger in \gls{tvbo}]\label{def:event-trigger_framework}
    Given a test function $\psi_t$ and a threshold function $\kappa_t$, both of which can depend on the current dataset $\mathcal{D}_t$ and the latest query location and measurement pair $(\optvar_t, y_t)$, the event trigger at time $t$ is
    \begin{equation}
        \gamma_{\mathrm{reset}} = 1 \iff \psi_t\left(\mathcal{D}_t, (\optvar_t, y_t)\right) > \kappa_t\left(\mathcal{D}_t, (\optvar_t, y_t)\right)   \label{eq:event-trigger_framework}
    \end{equation}
    where $\gamma_{\mathrm{reset}}$ is the binary indicator for whether to reset the dataset $(\gamma_{\mathrm{reset}}=1)$ or not $(\gamma_{\mathrm{reset}}=0)$.
\end{definition}
We next design and develop $\psi_t$ and $\kappa_t$ for the specific event trigger of \algname.
We base these functions on Bayesian uniform error bounds for \gls{gp} regression \citep{Srinivas2010}.
Other design options would also be possible, \eg likelihood-based or performance-based event triggers.
In the following, we denote the time step of a reset as $\triggerT^{(1)}, \triggerT^{(2)}, \dots$ and define $\triggerT \in \left[\triggerT^{(i)}, \triggerT^{(i+1)}\right] \subset \N_+$ as the time in between resets.
We have $\triggerT=t$ if no reset has yet occurred. 
\begin{lemma}[{\citet[Lemma~5.5]{Srinivas2010}}] \label{lem:srinivas}
Pick $\delta \in (0, 1)$ and set $\rho_{\triggerT} = 2\ln{\frac{\pi_{\triggerT}}{\delta}}$, where $\sum_{\triggerT \geq 1} \pi_{\triggerT}^{-1} = 1$, $\pi_{\triggerT} >0$. Let $f\sim \mathcal{GP}(0,k(\optvar, \optvar'))$. Then, $| f(\optvar_t) - \mu_{\mathcal{D}_t}(\optvar_t) | \leq \sqrt{\rho_{\triggerT}} \sigma_{\mathcal{D}_t} (\optvar_t)$ 
holds for all ${\triggerT} \geq 1$ with probability at least $1 - \delta$.
\end{lemma}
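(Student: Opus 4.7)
The plan is to combine the closed-form Gaussian posterior of $f(\optvar_t)$ with a standard Gaussian tail bound and a union bound indexed by $\triggerT$. First, I would use conjugacy: since $f\sim\GP(0,k)$ a priori and observations in $\mathcal{D}_t$ are perturbed by i.i.d.\ Gaussian noise (Assumption~\ref{ass:noisy_obs}), the conditional law of $f(\optvar_t)$ given $\mathcal{D}_t$ is $\mathcal{N}(\mu_{\mathcal{D}_t}(\optvar_t),\sigma_{\mathcal{D}_t}^2(\optvar_t))$ with the posterior mean and variance as in \eqref{eq:mean}--\eqref{eq:var}. Even when $\optvar_t$ is chosen adaptively by the algorithm, it is $\mathcal{D}_t$-measurable, so conditionally on $\mathcal{D}_t$ it acts as a deterministic input and the Gaussian characterization still applies; in particular, $(f(\optvar_t)-\mu_{\mathcal{D}_t}(\optvar_t))/\sigma_{\mathcal{D}_t}(\optvar_t)$ is standard normal under this conditional law.

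Next, I would apply the Gaussian tail inequality $\mathbb{P}(|Z|>c)\leq e^{-c^2/2}$ (valid for $Z\sim\mathcal{N}(0,1)$ and $c\geq 0$ via Mill's ratio) to this normalized residual. Substituting $c=\sqrt{\rho_{\triggerT}}$ with $\rho_{\triggerT}=2\ln(\pi_{\triggerT}/\delta)$ produces the per-$\triggerT$ failure probability
\[
\mathbb{P}\bigl(|f(\optvar_t)-\mu_{\mathcal{D}_t}(\optvar_t)|>\sqrt{\rho_{\triggerT}}\,\sigma_{\mathcal{D}_t}(\optvar_t)\bigm|\mathcal{D}_t\bigr)\leq e^{-\rho_{\triggerT}/2}=\delta/\pi_{\triggerT},
\]
and taking expectation over $\mathcal{D}_t$ turns this into the unconditional marginal bound.

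Finally, I would union-bound across $\triggerT\geq 1$: the event that the conclusion of the lemma fails for some $\triggerT$ is contained in the countable union of the per-$\triggerT$ failure events, whose total probability is at most $\sum_{\triggerT\geq 1}\delta/\pi_{\triggerT}=\delta$ by the summability hypothesis. A canonical valid choice is $\pi_{\triggerT}=\pi^2\triggerT^2/6$. I do not anticipate any deep obstacle here; the argument is a textbook instance of Gaussian concentration combined with a weighted union bound. The only subtlety worth flagging is the measurability concern in the first step, which is resolved cleanly by conditioning on $\mathcal{D}_t$ so that the adaptively chosen query point becomes deterministic before the posterior is invoked.
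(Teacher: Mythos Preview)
Your proposal is correct and matches the original argument of \citet[Lemma~5.5]{Srinivas2010}: Gaussian posterior via conjugacy, the tail bound $\mathbb{P}(|Z|>c)\le e^{-c^2/2}$, and a weighted union bound over $\triggerT$ using $\sum_{\triggerT\ge1}\pi_{\triggerT}^{-1}=1$. The paper itself does not give a proof of this lemma but simply cites \citet{Srinivas2010}, so there is nothing further to compare.
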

\begin{remark} \label{rem:pi_t}
For $\sum_{\triggerT \geq 1} \pi_{\triggerT}^{-1} = 1$ to hold, one can choose $\pi_{\triggerT} = \pi^2 \triggerT^2 / 6$ as in \citet{Srinivas2010}.
\end{remark}

The result gives a high probability bound for all time steps on the absolute deviation between the posterior mean and a time-invariant objective function at the query location $\optvar_t$ based on the posterior variance and parameter $\rho_{\triggerT}$.
If a time-varying objective function satisfies this bound for all time steps, the time variations are negligible with high probability (\ie $\varepsilon \approx 0$).
On the other hand, if the bound in Lemma~\ref{lem:srinivas} is violated at a chosen query location $\optvar_t$, the objective function has likely changed.

The algorithm needs to evaluate the trigger event at every time step.
While it cannot directly evaluate $\left\lvert f_t(\optvar_t) - \mu_{\mathcal{D}_t}(\optvar_t) \right\rvert$ in Lemma~\ref{lem:srinivas} (\cf Assumption~\ref{ass:noisy_obs}), we can evaluate $\left\lvert y_t - \mu_{\mathcal{D}_t}(\optvar_t) \right\rvert$.
Thus, we adapt Lemma~\ref{lem:srinivas} to design a probabilistic bound that includes only $y_t$ instead of $f_t(\optvar_t)$.
For this, we first bound the noise in probability. All proofs of the following Lemmas are in Appendix~\ref{sec:lemma3}.

\begin{lemma}
\label{lem:bounded_noise}
Pick $\delta \in (0,1)$ and set $\bar{w}_{\triggerT}^2 = {2\noisevar^2 \ln \frac{\pi_{\triggerT}}{\delta}}$, where $\sum_{{\triggerT} \geq 1} \pi_{\triggerT}^{-1} = 1$, $\pi_{\triggerT} >0$. Then, the noise sequence in Assumption~\ref{ass:noisy_obs} obtained since the last reset satisfies ${|w_{\triggerT}| \leq \bar{w}_{\triggerT}}$ for all ${\triggerT} \geq 1$ 
with probability at least $1 - \delta$.
\end{lemma}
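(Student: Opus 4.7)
The statement is a standard Gaussian concentration plus union bound argument, following the exact template used by \citet{Srinivas2010} in Lemma~\ref{lem:srinivas}. The plan is therefore to (i) get a per-time-step tail bound on $|w_{\triggerT}|$, (ii) plug in the specific value of $\bar{w}_{\triggerT}$ so that the per-step failure probability is $\delta/\pi_{\triggerT}$, and (iii) union-bound over all $\triggerT \geq 1$ using the summability assumption on $\pi_{\triggerT}^{-1}$.

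For step (i), since $w_{\triggerT} \sim \mathcal{N}(0,\sigma_n^2)$ by Assumption~\ref{ass:noisy_obs}, the standard Gaussian tail inequality gives
\begin{equation*}
\mathbb{P}\bigl\{|w_{\triggerT}| > u\bigr\} \leq \exp\!\left(-\tfrac{u^2}{2\sigma_n^2}\right)
\quad \text{for all } u \geq 0,
\end{equation*}
which I would cite as a textbook fact (or derive in one line from the Chernoff bound on a standard normal and symmetry). For step (ii), substituting $u = \bar{w}_{\triggerT} = \sqrt{2\sigma_n^2 \ln(\pi_{\triggerT}/\delta)}$ makes the exponent collapse to $-\ln(\pi_{\triggerT}/\delta)$, giving $\mathbb{P}\{|w_{\triggerT}| > \bar{w}_{\triggerT}\} \leq \delta/\pi_{\triggerT}$.

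For step (iii), I would take a union bound over $\triggerT \geq 1$:
\begin{equation*}
\mathbb{P}\Bigl\{\exists\, \triggerT \geq 1 : |w_{\triggerT}| > \bar{w}_{\triggerT}\Bigr\}
\;\leq\; \sum_{\triggerT \geq 1} \frac{\delta}{\pi_{\triggerT}}
\;=\; \delta \sum_{\triggerT \geq 1} \pi_{\triggerT}^{-1}
\;=\; \delta,
\end{equation*}
where the last equality uses the hypothesis $\sum_{\triggerT \geq 1} \pi_{\triggerT}^{-1} = 1$. Taking the complement yields the desired simultaneous bound with probability at least $1 - \delta$.

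There is no real obstacle here; the result is essentially the scalar-Gaussian analogue of Lemma~\ref{lem:srinivas}, and the only mild subtlety is being consistent about using the two-sided tail bound (so that the factor matches $\bar{w}_{\triggerT}$ exactly as defined) and making clear that the indexing $\triggerT$ denotes the steps \emph{since the last reset}, so that the $w_{\triggerT}$ appearing in the lemma are genuinely i.i.d.\ zero-mean Gaussians as required by Assumption~\ref{ass:noisy_obs}. Everything else is direct substitution.
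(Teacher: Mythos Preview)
Your proposal is correct and follows essentially the same approach as the paper: the paper's proof also invokes the standard Gaussian tail bound $\mathbb{P}\{|r|>s\}\leq \exp\{-s^2/(2\sigma^2)\}$, substitutes $\bar{w}_{\triggerT}$ to get per-step failure probability $\delta/\pi_{\triggerT}$, and then takes the union bound over all $\triggerT$, citing the argument of \citet[Lemma~5.5]{Srinivas2010}. Your write-up is in fact slightly more explicit than the paper's sketch, but the logic is identical.
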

With Lemma~\ref{lem:bounded_noise}, we can extend Lemma~\ref{lem:srinivas} to build a probabilistic error bound that can act as the test and corresponding threshold functions of our event trigger.
\begin{lemma}\label{lem:trigger_bound}
Let $f_t(\optvar)$ follow Assumption~\ref{ass:markov_model} with $\varepsilon =0$. Pick $\triggerDelta \in (0, 1)$ and set $\rho_{\triggerT} = 2\ln{\frac{2\pi_{\triggerT}}{\triggerDelta}}$, where $\sum_{{\triggerT} \geq 1} \pi_{\triggerT}^{-1} = 1$, $\pi_{\triggerT} >0$. Also set $\bar{w}_{\triggerT}^2 = {2\noisevar^2 \ln \frac{2\pi_{\triggerT}}{\triggerDelta}}$. Then, observations $y_t$ under Assumption~\ref{ass:noisy_obs} satisfy $| y_t - \mu_{\mathcal{D}_t}(\optvar_t) | \leq \sqrt{\rho_{\triggerT}} \sigma_{\mathcal{D}_t} (\optvar_t) + \bar{w}_{\triggerT}$
for all $\triggerT \geq 1$ with probability at least $1 - \triggerDelta$.
\end{lemma}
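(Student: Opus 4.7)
The plan is to decompose the quantity $|y_t - \mu_{\mathcal{D}_t}(\optvar_t)|$ via the triangle inequality into a GP-regression error term and a pure noise term, and then invoke Lemma~\ref{lem:srinivas} and Lemma~\ref{lem:bounded_noise}, each with confidence level $\triggerDelta/2$, followed by a union bound.

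First I would note that under Assumption~\ref{ass:noisy_obs}, $y_t = f_t(\optvar_t) + w_t$, so by the triangle inequality
\begin{equation*}
\bigl| y_t - \mu_{\mathcal{D}_t}(\optvar_t) \bigr| \;\leq\; \bigl| f_t(\optvar_t) - \mu_{\mathcal{D}_t}(\optvar_t) \bigr| \;+\; |w_t|.
\end{equation*}
It therefore suffices to bound each summand with high probability uniformly over $\triggerT \geq 1$.

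Next, because the hypothesis fixes $\varepsilon = 0$, Assumption~\ref{ass:markov_model} collapses to $f_t \equiv g_1 \sim \GP(0,k)$, i.e.\ the objective is a single static sample from the prior. This exactly matches the setting of Lemma~\ref{lem:srinivas}, which I would apply with $\delta \gets \triggerDelta/2$. Since then $2\ln(\pi_{\triggerT}/(\triggerDelta/2)) = 2\ln(2\pi_{\triggerT}/\triggerDelta) = \rho_{\triggerT}$, this yields
\begin{equation*}
\bigl| f_t(\optvar_t) - \mu_{\mathcal{D}_t}(\optvar_t) \bigr| \;\leq\; \sqrt{\rho_{\triggerT}}\,\sigma_{\mathcal{D}_t}(\optvar_t) \quad \forall\,\triggerT \geq 1
\end{equation*}
with probability at least $1 - \triggerDelta/2$. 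Analogously, I apply Lemma~\ref{lem:bounded_noise} with $\delta \gets \triggerDelta/2$; since $\sqrt{2\sigma_n^2\ln(\pi_{\triggerT}/(\triggerDelta/2))} = \bar{w}_{\triggerT}$, this gives $|w_{\triggerT}| \leq \bar{w}_{\triggerT}$ for all $\triggerT \geq 1$ with probability at least $1 - \triggerDelta/2$.

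Finally, a union bound ensures both events hold simultaneously with probability at least $1 - \triggerDelta$, and on that event the triangle-inequality display above yields the claimed bound. There is no real obstacle here: the only care required is the bookkeeping on the confidence levels so that the doubled $\pi_{\triggerT}$ factor in $\rho_{\triggerT}$ and $\bar{w}_{\triggerT}$ matches exactly when $\triggerDelta/2$ is substituted into the parent lemmas. The assumption $\varepsilon = 0$ is what lets me treat $f_t$ as a single static GP sample and thereby invoke Lemma~\ref{lem:srinivas} uniformly in $\triggerT$; an extension to $\varepsilon > 0$ would need additional work beyond the scope of this lemma.
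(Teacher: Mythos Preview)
Your proposal is correct and matches the paper's own proof essentially line for line: the paper also applies Lemma~\ref{lem:srinivas} with $\triggerDelta/2$, uses the triangle inequality to absorb the noise term, then applies Lemma~\ref{lem:bounded_noise} with $\triggerDelta/2$ and takes a union bound. Your write-up is in fact more explicit than the paper's in spelling out why the hypothesis $\varepsilon=0$ is needed (to reduce $f_t$ to a single static GP sample so that Lemma~\ref{lem:srinivas} applies), which the paper leaves implicit.
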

We can now formally define the event trigger for \algname following the framework outlined in Definition~\ref{def:event-trigger_framework}. 
Leveraging the result from Lemma~\ref{lem:trigger_bound}, we choose as the test and threshold function
\begin{align}
    \underbracket{\psi_t\left(\mathcal{D}_t, (\optvar_t, y_t)\right) = \left\lvert y_t - \mu_{\mathcal{D}_t}(\optvar_t) \right\rvert}_{\coloneqq \text{ test function $\psi_t$ in Def.~\ref{def:event-trigger_framework}}} \text{ and }
    \underbracket{
    \kappa_t\left(\mathcal{D}_t, (\optvar_t, y_t)\right) = \sqrt{\rho_{\triggerT}} \sigma_{\mathcal{D}_t} (\optvar_t) + \bar{w}_{\triggerT}}_{\coloneqq \text{ threshold function $\kappa_t$ in Def.~\ref{def:event-trigger_framework}}},\label{eq:threshold_function}
\end{align}
respectively.
With this, we have designed a principled event trigger to detect changes in the objective function.
Note that our event trigger monitors for changes in the objective function only at the current query location and cannot detect changes anywhere else.
Our problem setting restricts an algorithm to querying the objective only once per time step.
\begin{algorithm}[t] 
\caption{Event-triggered GP-UCB (\algname).}
\begin{algorithmic}[1]
\State \textbf{Define:} $\mathcal{GP}(0, k)$, $\X \subset \R^d$, $\triggerDelta \in (0,1)$, $\mathcal{D}_1=\emptyset$, lower reset bound $\ubar{N}$, upper reset bound $\bar{N}$, $\triggerT=1$
\For{$t = 1, 2, \dots, T$}
    \State Train GP model with the current data set $\mathcal{D}_t$
    \State Choose $\beta_t$ (\eg according to Theorem~\ref{theorem:regret_bounds})
    \State Select $\optvar_t = \argmax_{\optvar \in \X} \mu_{\mathcal{D}_t}(\optvar) + \sqrt{\beta_t}\sigma_{\mathcal{D}_t}(\optvar)$ \Comment{time-invariant posterior using \eqref{eq:var}}
    \State Sample next observation $y_{t} = f_{t}(\optvar_{t}) + w_t$
    \State $\gamma_{\mathrm{reset}} \gets$ \Call{Event Trigger}{$\mathcal{D}_{t}$, $(y_{t}, \optvar_{t}), \triggerDelta$} \Comment{\hspace{-0.1cm} evaluate \eqref{eq:event-trigger_framework} with \eqref{eq:threshold_function}}
        \If{($\gamma_{\mathrm{reset}}$ \textbf{and} $\triggerT \in [\ubar{N}, \bar{N}]$) \textbf{or} $\triggerT = \bar{N}$} \Comment{\hspace{-0.1cm} reset only if within reset window}
            \State Reset dataset $\mathcal{D}_{t+1}=\{(y_{t}, \optvar_{t})\}$ and set $\triggerT = 1$
        \Else
            \State Update dataset $\mathcal{D}_{t+1} = \mathcal{D}_{t} \cup \{(y_{t}, \optvar_{t})\}$ and set $\triggerT = \triggerT + 1$
        \EndIf
\EndFor
\end{algorithmic}
\label{alg:algorithm}
\end{algorithm}
The resulting algorithm \algname is then as follows: We perform standard GP-UCB as long as the event trigger evaluates to $\gamma_{\mathrm{reset}}=0$.
When the event trigger is activated the objective function $f_t$ has likely changed, and the data has become stale.
If $\triggerT \in [\ubar{N}, \bar{N}]$, \algname resets the dataset and restarts the optimization.
This is summarized in Algorithm~\ref{alg:algorithm}.
The single design parameter of our event trigger is $\triggerDelta$ in Lemma~\ref{lem:trigger_bound}, which has a well-understood meaning as defining the tightness of the probabilistic uniform error bound.
Since resets are confined to the range between the lower and upper reset thresholds, \algname guarantees a regret no worse than that specified in Corollary~\ref{coro:scaling}.

\section{Empirical Evaluations} \label{sec:empirical}

In the following, we will first investigate the empirical impact of $\ubar{N}$ and $\bar{N}$ on the reset times of the algorithm as well as on the regret.
In Section~\ref{sec:empirical_experiments}, we will benchmark our algorithm against other baselines on various synthetic and real-world problems and end with practical extensions for our proposed algorithm.

\subsection{Empirical Impact of \texorpdfstring{$\ubar{N}$}{N} and \texorpdfstring{$\bar{N}$}{N} on Reset Times and Regret}

\paragraph{Impact on the Reset Times}
To analyze the impact of $\ubar{N}$ and $\bar{N}$ on the behavior and performance of \algname, we first define the window size in which the event trigger can reset the data set as $W\coloneqq \bar{N} - \ubar{N}$.
Further, note that the time of reset using \eqref{eq:threshold_function} in the event trigger framework is a random variable as it depends on the realization of $f_t$ as well as the observation model.
The ridgeline plot in \fig\ref{fig:stopping_time} (left) shows the empirical distribution of this reset time as (scaled) histograms; starting from the reset time of R-GP-UCB, the window size increases.
For this, we ran \algname on \num{30000} different objective functions following Assumption~\ref{ass:markov_model}, so-called within-model comparisons (\cf \sect\ref{sec:within-model}), for each window size.
Small window sizes cut off the probability mass in the tails of the histograms, while for larger window sizes, the histogram resembles an inverse gamma distribution.
\paragraph{Impact on the Regret}
\fig\ref{fig:stopping_time} (right) demonstrates how increasing the window size affects regret for different $\varepsilon$.
As the window size grows, \algname's performance (lines with markers) consistently remains below the regret of R-GP-UCB with known $\varepsilon$ (dash-dotted lines).
This suggests that even coarse bounds lead to robust performance, which is desirable from a practical viewpoint.
The improved performance is contrary to the regret bounds in Corollary~\ref{coro:scaling}, which gets larger as the window size grows.
However, Corollary~\ref{coro:scaling} gives regret bounds for \emph{all} triggers, including the worst-case. 
Empirically, the proposed event trigger performs much better than the worst-case and even improves on R-GP-UCB despite less prior knowledge of the rate of chance.
\algname performs best for larger window sizes.
This suggests that prior knowledge of the rate of change is less important than the regret bounds in Corollary~\ref{coro:scaling} imply.
The proposed event trigger uses the gained flexibility from the theoretical investigation to perform resets \emph{when necessary} within the reset window.
Other event triggers may not show the same behavior: resetting, \eg always at the lower bound $\ubar{N}$ (dashed lines in \fig\ref{fig:stopping_time}) exhibits increased regret with growing window sizes highlighting the importance of choosing a suitable resetting strategy for good empirical performance.

\begin{figure*}[t]
    \centering
    \begin{subfigure}[t]{0.49\textwidth}
        \centering
        \includegraphics[]{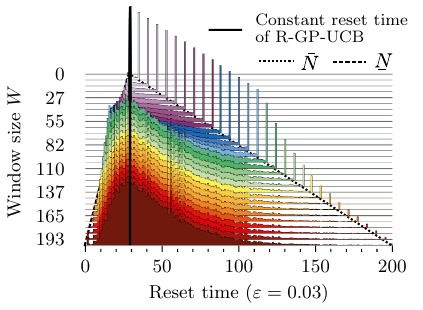}%
        % \caption{\label{fig:image1}}
    \end{subfigure}%
    \,
    \begin{subfigure}[t]{0.49\textwidth}
        \centering
        \includegraphics[]{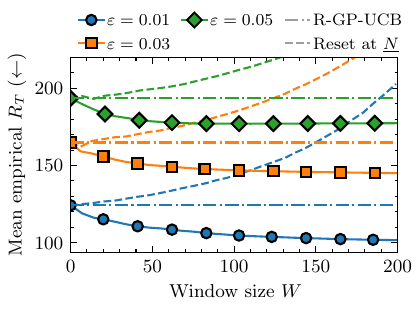}%
        % \caption{\label{fig:image2}}
    \end{subfigure}%
    \caption{\textit{Left:} Influence of the window size on the empirical PDF of reset times.
    For small window sizes, the probability mass in the tails is cut off through $\ubar{N}$ and $\bar{N}$.
    As the window size $W$ increases, hence the flexibility of our event trigger increases, the histogram resembles an inverse-gamma-like distribution. 
    \textit{Right:} Influence of the window size on the empirical regret for different $\varepsilon$.
    As the window size increases, the regret of \algname (markers) decreases always staying below R-GP-UCB (dash-dotted lines).
    Increasing the flexibility of our trigger empirically reduces regret.
    Always resetting at $\ubar{N}$ (dashed line) shows increased empirical regret. This indicates that Corollary~\ref{coro:scaling} is a relatively loose bound for our trigger.}
    \label{fig:stopping_time}
\end{figure*}%

\subsection{Empirical Evaluation on Synthetic and Real-World Data}\label{sec:empirical_experiments}

Using both synthetic data and standard \gls{bo} benchmarks, we compare \algname to state-of-the-art GP-UCB baseline for \gls{tvbo}.
Specifically, we compare against R-GP-UCB, which resets periodically, TV-GP-UCB \citep{Bogunovic2016}, which gradually forgets past data points, and UI-TVBO \citep{Brunzema2022}, which continuously injects uncertainty at past data points.
We further compare these time-varying algorithms to the time-invariant GP-UCB \citep{Srinivas2010}.
In our experiments, we find that:
\begin{compactenum}[\itshape(i)]
    \item The adaptive resetting of \algname always outperforms periodic resetting;
    \item \algname outperforms all baselines on synthetic data if the rate of change is misspecified;
    \item \algname outperforms all baselines on real-world data (where the rate of change is unknown).
\end{compactenum}%
We reiterate that \algname does not rely on knowledge about the true rate of change.
To highlight the generalization capabilities of \algname to different scenarios, we use the same parameter for the event trigger ($\triggerDelta=0.1$ in Lemma~\ref{lem:trigger_bound}) in all experiments.
A sensitivity analysis of the hyperparameter $\triggerDelta$ on the regret is in Appendix~\ref{sec:sensitivity} and shows that ET-GP-UCB performs similarly for a wide range of $\triggerDelta$.

As in \citep{Bogunovic2016} and \citep{Srinivas2010}, we utilize a logarithmic scaling $\beta_t$ as $\beta_t=\mathcal{O}(d \ln (t))$, where $\beta_t= c_1 \ln(c_2 t)$.
This approximates $\beta_t$ in Theorem~\ref{theorem:regret_bounds} and allows for a direct comparison to \citet{Bogunovic2016}, as they suggest to set $c_1=0.8$ and $c_2=4$ for a suitable exploration-exploitation trade-off.
All experiments in this section are conducted using BoTorch \citep{Balandat2020} and GPyTorch \citep{Gardner2018}, and full details are in Appendix~\ref{sec:design}.
%\footnote{The custom implementations are provided as part of the supplementary material.}
All results show the median and interquartile performance.
In the following, we indicate the direction of better performance with upward ($\uparrow$) and downward ($\downarrow$) arrows.

\subsubsection{Within-Model Comparison}\label{sec:within-model}
First, we consider the same two-dimensional setting for synthetic data as in \citet{Bogunovic2016}.
We choose the compact set to be $\X = [0, 1]^2$ and generate $50$ different objective functions according to Assumption~\ref{ass:markov_model} with a known $\varepsilon$, a SE kernel with fixed lengthscales $l=0.2$, time horizon $T=400$, and a noise variance of $\noisevar^2=0.02$.
The setting in which all the hyperparameters are known to the algorithm is referred to as \textit{within-model comparison} \citep{Hennig2012}; such comparisons are used to evaluate an algorithm's performance given all assumptions are fulfilled (\cf Appendix~\ref{sec:app_wm} for further details on the creation of the within-model objective functions).
Here, TV-GP-UCB acts as the reference case with full information.
It shows the best performance an UCB algorithm can obtain in this setting as it takes Assumption~\ref{ass:markov_model} explicitly into account by using the time-varying posterior updates in \eqref{eq:var_time}.
R-GP-UCB is also parameterized using the true $\varepsilon$ for the periodic reset time as $N_{\text{const}}=\lceil \min\{T, 12 \varepsilon^{-1/4} \}\rceil$ \citep[Corollary~4.1]{Bogunovic2016}.
For \algname, we compare three different variants with different lower and upper bounds on $\varepsilon$ and set $\bar N =\lceil \min\{T, 12 \ubar\varepsilon^{-1/4} \}\rceil$ and $\ubar N =\lceil \min\{T, 12 \bar\varepsilon^{-1/4} \}\rceil$ according to Corollary~\ref{coro:scaling}.
For UI-TVBO, we set $\hat{\sigma}_w^2=\varepsilon$ (\cf \cite{Brunzema2022}).
Table~\ref{tab:within-model} shows the normalized regret ($\text{median}_{q25\%}^{q75\%}$) for different ${\varepsilon}$.
We can observe that in all cases the \algname variant with the largest flexibility of the event trigger outperforms all baselines with a time-invariant GP model (top).
As expected, TV-GP-UCB is the best-performing algorithm if the rate of change is correct.
However, if the rate of change is misspecified (right columns), \algname also outperforms the baselines with a time-varying GP model as it does not rely on exact knowledge of $\varepsilon$ and can react to the \emph{realized} changes of $f_t$.

\begin{table}[t]
    \caption{Within-model comparisons. \textit{Top:} Algorithms with time-invariant updates following \eqref{eq:var}. \textit{Bottom:} Algorithms with more compute intensive time-varying updates similar to \eqref{eq:var_time}. The best median performance for each section is highlighted in \textbf{bold}, and in \colorbox{green!25}{green} for the best performance across top and bottom.}
    \label{tab:within-model}
    \begin{center}
    \begin{small}
    \begin{tabular}{l c c  c  c c }
    \toprule 
    & \multicolumn{1}{c}{$\varepsilon = 0.01$} & \multicolumn{1}{c}{$\varepsilon = 0.03$} & \multicolumn{1}{c}{$\varepsilon = 0.05$} & \multicolumn{1}{c}{Miss. $\varepsilon = 0.001^\dagger$} & \multicolumn{1}{c}{Miss. $\varepsilon = 0.2^\dagger$} \\
    \cmidrule(lr){2-2}
    \cmidrule(lr){3-3}
    \cmidrule(lr){4-4} 
    \cmidrule(lr){5-5} 
    \cmidrule(lr){6-6} 
    \textbf{Algorithm} & $R_T / T$ ($\downarrow$) & $R_T / T$ ($\downarrow$) & $R_T / T$ ($\downarrow$)  & $R_T / T$ ($\downarrow$) & $R_T / T$ ($\downarrow$)  \\
    \midrule
    GP-UCB &$0.748\,_{0.610}^{0.904}$ & $1.051\,_{0.924}^{1.243}$ & $1.276\,_{1.088}^{1.377}$ &$1.276\,_{1.088}^{1.377}$ &$1.276\,_{1.088}^{1.377}$  \\
    R-GP-UCB  &$0.622\,_{0.571}^{0.681}$ & $0.831\,_{0.770}^{0.894}$ & $0.985\,_{0.899}^{1.035}$ &$0.902\,_{0.829}^{0.985}$ &$1.054\,_{0.995}^{1.119}$  \\
    ET-GP-UCB$_{0.01}^{0.05}$$^*$ &$0.604\,_{0.531}^{0.682}$   & $0.778\,_{0.720}^{0.841}$ & $0.879\,_{0.824}^{0.966}$ &$0.879\,_{0.824}^{0.966}$ &$0.879\,_{0.824}^{0.966}$ \\
    ET-GP-UCB$_{0.001}^{0.1}$$^*$ &$0.507\,_{0.448}^{0.581}$   & $0.688\,_{0.654}^{0.782}$ & $0.866\,_{0.806}^{0.927}$ &$0.866\,_{0.806}^{0.927}$ &$0.866\,_{0.806}^{0.927}$ \\
    ET-GP-UCB$_{0.0}^{1.0}$$^*$ &$\mathbf{0.483\,_{0.407}^{0.571}}$   & $\mathbf{0.686\,_{0.639}^{0.738}}$ & $\mathbf{0.849\,_{0.748}^{0.899}}$ &\cellcolor{green!25}$\mathbf{0.849\,_{0.748}^{0.899}}$ &\cellcolor{green!25}$\mathbf{0.849\,_{0.748}^{0.899}}$ \\
    \midrule
    TV-GP-UCB &\cellcolor{green!25}$\mathbf{0.299\,_{0.258}^{0.365}}$ & \cellcolor{green!25}$\mathbf{0.500\,_{0.447}^{0.565}}$ & \cellcolor{green!25}$\mathbf{0.622\,_{0.582}^{0.686}}$&$0.960\,_{0.854}^{1.036}$&$\mathbf{1.223\,_{1.126}^{1.289}}$ \\
    UI-TVBO &$0.351\,_{0.311}^{0.376}$  & $0.647\,_{0.600}^{0.678}$ & $0.868\,_{0.833}^{0.904}$&$\mathbf{0.953\,_{0.839}^{1.072}}$&$1.381\,_{ 1.340}^{1.417}$  \\
    \bottomrule
    \multicolumn{4}{l}{* Our algorithms with formatting \algname$_{\text{lower bound on }\varepsilon}^{\text{upper bound on }\varepsilon}$.} & \multicolumn{2}{c}{$^\dagger$ True rate of change is $\varepsilon=0.05$.}
    \end{tabular}
    \end{small}
    \end{center}
    % \vskip -0.1in
\end{table}
\setlength{\tabcolsep}{0.25em}

\subsubsection{Real-World Temperature Data}\label{sec:temperature-data}

To benchmark the algorithms on real-world data, we use the temperature dataset collected from $46$ sensors deployed at Intel Research Berkeley over eight days at $10$ minute intervals.\footnote{We thank the researchers at Intel for the publically available data set under: \href{https://db.csail.mit.edu/labdata/labdata.html}{https://db.csail.mit.edu/labdata/labdata.html}.}
This dataset was also used as a benchmark in previous work on GP-UCB and \gls{tvbo} \citep{Srinivas2010,Krause2011,Bogunovic2016}.
The objective is to activate the sensor with the highest temperature at each time step resulting in a spatio-temporal monitoring problem.
In our \gls{tvbo} setting, $f_t$ consists of all sensor readings at time step~$t$, and an algorithm can query only a single sensor every $10$ minutes.
Hence, the regret following Definition~\ref{def:regret} is the difference between the maximum and the recorded temperature by the activated sensor.

For each experiment, two test days are selected, while the preceding days serve as the training set.
Using this training set, all data is normalized to have a mean of zero and a standard deviation of one.
The empirical covariance matrix is computed based on the training set and employed as the kernel for all algorithms as in \cite{Bogunovic2016}. 
The sensor data of the days \num{7} and \num{8} is shown in \fig\ref{fig:temperature_results}~(a) and an overview of the remaining days is given in Appendix~\ref{sec:train_days}.
For TV-GP-UCB and UI-TVBO, we set $\varepsilon=\hat{\sigma}_w^2=0.03$ and the periodic reset time of R-GP-UCB to $N_{\text{const}}=15$ according to \citet{Bogunovic2016}.
For this benchmark, we further compare against TV-GP-UCB with online maximum likelihood estimation (MLE) of $\varepsilon$ as one might choose to do in practice if the true rate of change is unknown.
Note that this significantly increases the computational overhead.
For \algname, we use the best-performing variant form Sec.~\ref{sec:within-model} as well as the same parameter for the event trigger as in all the previous experiments.

%%%%% FORMATTING! %%%%%
\begin{figure*}[t]
	\centering
	\includegraphics[]{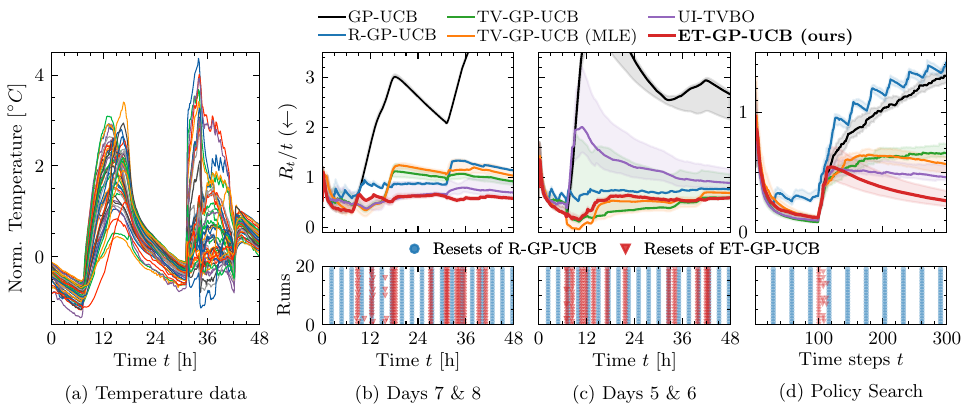}%
    \caption{
    % Subplot (a)
    \textit{Real-world examples.} Subfigure (a) shows the temperature data from the test days used in \citet{Bogunovic2016}.
    % Subplot (b)
    In (b), the performance of the algorithms on these two days (top) and the resets of R-GP-UCB and \algname (bottom) are displayed.
    % Subplot (c)
    Subfigure (c) shows the performance on other test days.
    ET-GP-UCB displays superior performance improvement compared to all other algorithms.
    % Concluding remarks
    For these experiments on real-world data, we use the same parameter for the event trigger as in all previous experiments whereas TV-GP-UCB and R-GP-UCB relied on estimating $\varepsilon$. % from data.
    % Subplot (d)
    Lastly, in subfigure (d), ET-GP-UCB also outperforms all competing baselines on a policy search benchmark.
    For all methods, the median performance and the interquartile ($25\%$ and $75\%$) are displayed.
    }
    \label{fig:temperature_results}
\end{figure*}
%%%%%

\fig\ref{fig:temperature_results}~(b) presents the normalized regret (top) for each algorithm over $20$ runs and the resets of R-GP-UCB and \algname (bottom).
In this real-world benchmark, \algname outperforms all the competing baselines.
It highlights that using a constant rate of change in the \gls{gp} model as in TV-GP-UCB can be problematic for real-world problems.
\algname also shows the best performance on other test days as shown in \fig\ref{fig:temperature_results}~(c).
considering the larger interquartile of TV-GP-UCB.
While \algname and TV-GP-UCB have similar median performance, \algname displayed a significantly smaller interquartile range, indicating more consistent and reliable results.
From the bottom figure, we can observe the adaptive nature of \algname.
Considering the timing of resets of \algname, it appears that especially when changes in the objective function are more distinct (\eg changes between night and day), the proposed event trigger framework is empirically beneficial compared to using an estimated constant rate of change.

\subsubsection{Policy Search in Time-Varying Environments}\label{sec:policy_search}

We also compare all algorithms on a four-dimensional policy search benchmark of a cart-pole system as proposed in \citep{Brunzema2022}.
Given the system dynamics of the cart-pole system, it is possible to calculate the optimal policy and thus calculate the regret following \defn\ref{def:regret}.
To induce a change in the objective function, we increase the friction in the bearing at $t=100$ by a factor of $2.5$.
We expect the event trigger of \algname to detect this change in the objective function and converge to the new optimum after resetting the dataset.
In contrast to previous examples, we also perform online hyperparameter optimization of the spatial lengthscales using gamma priors to guide the search (all parameters are listed in Appendix~\ref{sec:hypers}).
For TV-GP-UCB, we choose $\varepsilon=0.03$ and we pick the constant reset timing of R-GP-UCB according to \citet[Corollary~4.1]{Bogunovic2016}.
For \algname, we again choose the same parameter for the event trigger as in all previous examples.

The results over $20$ runs are displayed in \fig\ref{fig:temperature_results}~(d).
We can observe that \algname is the best performing algorithm since it can detect the change in the objective function and converge to the new optimal solution.
We can further see that TV-GP-UCB and R-GP-UCB, which rely on an ongoing and smooth change in the objective function, empirically struggle to cope with such sudden changes resulting in significantly increased regret.
While UI-TVBO can also recover from the change, \algname still outperforms this baseline.
The example also highlights that a constant reset time that is too short can yield divergent behavior in higher-dimensional settings and non-smooth changes.

\subsubsection{Ablation on the Impact of Reset Frequency vs. Reset Timing}

To better understand the source of empirical improvement of ET-GP-UCB, we conducted an ablation study to test whether the performance gains are due to resetting at the correct frequency or at the correct time steps.
Specifically, we investigated the effect of using the average resetting time of ET-GP-UCB, derived from Monte-Carlo simulations of the event trigger, as the reset interval $N$ for R-GP-UCB.
\fig\ref{fig:ablation} shows within-model experiments across multiple rate of changes for 50 different seeds.
The results demonstrate that resetting at the average frequency of the trigger improves regret performance at low rates of change.
However, as the rate of change increases, adaptively resetting at the correct time steps, \ie considering the realized changes in the objective function, becomes crucial for minimizing regret.
% These findings highlight the importance of adaptivity in our approach.

\subsection{Practical Extensions of \algname}\label{sec:extentions}

\begin{figure}[t]
    \centering
    % Start first minipage for the image
    \begin{minipage}{0.45\textwidth} % Adjust width as needed
        \centering
        \includegraphics[width=1.\textwidth]{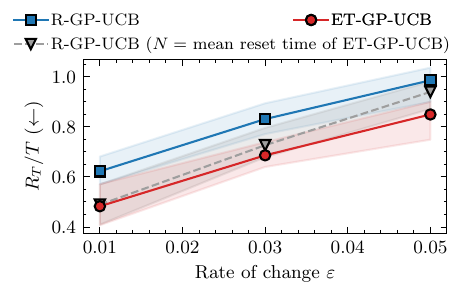}
        \caption{Impact of Reset Frequency vs. Timing.}
        \label{fig:ablation}
    \end{minipage}
    \hfill % Adds horizontal space between the minipages
    % Start second minipage for the table
    \begin{minipage}{0.45\textwidth} % Adjust width as needed
        \centering
        \captionof{table}{Online MLE hyperparameter optimization of the GP when using \algname.}
        \label{tab:mle_demo}
        \begin{tabular}{lccc}
            \toprule
                & \multicolumn{1}{c}{$\varepsilon = 0.01$} & \multicolumn{1}{c}{$\varepsilon = 0.03$} & \multicolumn{1}{c}{$\varepsilon = 0.05$} \\
        \cmidrule(lr){2-2}
        \cmidrule(lr){3-3}
        \cmidrule(lr){4-4} 
        \textbf{Algorithm} & $R_T / T$ ($\downarrow$) & $R_T / T$ ($\downarrow$) & $R_T / T$ ($\downarrow$)  \\
        \midrule
            GP-UCB & 0.835     & 1.260     & 	1.405        \\
            R-GP-UCB & 0.776     & 0.998     & 1.123       \\
            ET-GP-UCB$^*$  & 0.612   & 0.870  & 1.057  \\
            \bottomrule
            \multicolumn{4}{l}{*with ``learn-then-monitor'' approach}
        \end{tabular}
    \end{minipage}
\end{figure}

Next, we discuss practical extensions to \algname that can enhance its performance in practice.

\paragraph{Upper Bound on the Noise}
Imposing a fixed upper bound on the noise term $\bar{w}_\triggerT$ after a certain number of time steps can be practical.
While the gradual increase of $\bar{w}_\triggerT$ is crucial for Lemma~\ref{lem:trigger_bound} to hold, it results in large thresholds in \eqref{eq:threshold_function} over time.
This might lead to undetected changes, especially when combined with a large $\bar N$ and rare changes in the objective.
Considering bounded noise in Assumption~\ref{ass:noisy_obs} instead of Gaussian noise might also be more suitable for some applications.

\paragraph{Online Hyperparameter Optimization}
Practitioners should be cautious when using na\"ive MLE for all hyperparameters, as our algorithm fits a \emph{time-invariant} \gls{gp} to \emph{time-varying} data.
This can inflate the noise variance, again resulting in large thresholds in \eqref{eq:threshold_function}.
The core issue lies in the dual use of the model mismatch: the event-trigger mechanism is designed to detect model mismatches, while hyperparameter optimization aims to minimize this mismatch.
These two objectives conflict.
We suggest estimating the noise on hold-out data, as demonstrated in Section~\ref{sec:empirical_experiments} with the temperature dataset.
If this is not feasible, we recommend using reasonable priors and constraints for all hyperparameters and then performing maximum-a-posteriori estimation, as is common practice for most GP and BO libraries \citep{gpy2014,Gardner2018,Balandat2020} and demonstrated in the policy search example.
In the case of no prior knowledge, a simple ``learn-then-monitor'' approach, \ie learning the parameters and freezing them afterward while monitoring for changes, can be helpful.
To demonstrate this, we conducted experiments with online hyperparameter optimization for all baselines using only very coarse constraints.
Specifically, we use a lower bound of 0.001 and an upper bound of 0.1 for the noise (true noise value is 0.02) and a lower bound of 0.01 and an upper bound of 1 for the lengthscales (true lengthscales are 0.2).
For ET-GP-UCB, we use the first $2\cdot d$ time steps after a trigger event to learn the noise variance and individual lengthscales, and then freeze the parameters and start monitoring for changes using our event trigger.
Table~\ref{tab:mle_demo} shows that this approach decouples the two conflicting objectives, yielding reasonable performance without prior knowledge of $\varepsilon$, the spatial hyperparameters, nor the observation model.

\paragraph{Retaining Past Information}
\begin{figure*}
    \centering
    \begin{subfigure}[t]{0.54\textwidth}
        \centering
        \includegraphics[]{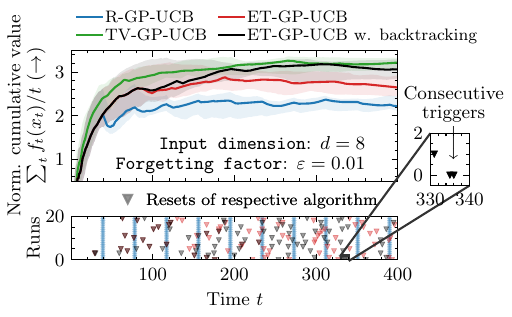}%
    \end{subfigure}%
    \,
    \begin{subfigure}[t]{0.44\textwidth}
        \centering
        \includegraphics[]{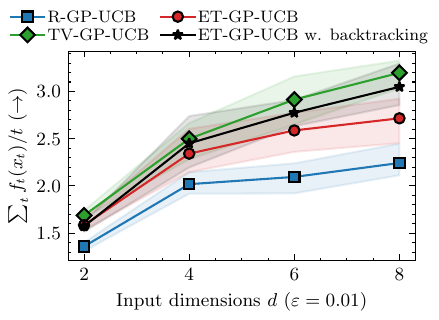}%
    \end{subfigure}%
    \caption{\textit{Left:} Performance on an eight-dimensional within-model objective function.
    Retaining a portion of past data using Algorithm~\ref{alg:backtracking} leads to a significant performance boost, nearly matching the results of TV-GP-UCB, even without prior knowledge of $\varepsilon$.
    \textit{Right:} Final performance for different dimensions ($d \in \{2, 4, 6, 8\}$) with a fixed $\varepsilon$. 
    As dimensionality increases, the benefit of retaining data becomes more pronounced.}
    \label{fig:backtracking}
\end{figure*}%
% Wrap the algorithm inside a wrap figure environment
Exploring alternative strategies to augment the data set instead of performing a full reset every time the event trigger activates is interesting to preserve the most recent information.
In this paper, we demonstrated that even with a full reset we obtain superior performance on commonly used real-world benchmarks but preserving information can become crucial for higher-dimensional problems.
One idea that arises directly from our event-triggered setup is to implement a ``backtracking'' mechanism when a change is detected: after detecting a change, the algorithm backtracks to check how many of the most recent data points still satisfy the trigger bounds.
These relevant observations are then retained for further optimization.
By re-using the event trigger to determine which data to keep, we ensure that, in the event of sudden changes, there are no inconsistent data points in the data set.
We cap the re-initialization size of the data set by $2\cdot d$; similar data set sizes are commonly used to initialize BO algorithms \citep{eriksson2019scalable, Balandat2020,muller2021local}.
The backtracking procedure is summarized in Algorithm~\ref{alg:backtracking}.
\begin{wrapfigure}[15]{r}{0.55\textwidth} % 'r' for right, 50% of text width
    \vspace{-0.1cm}
    \begin{minipage}{0.53\textwidth} % Use minipage to avoid issues
        \begin{algorithm}[H]%
            \caption{Backtracking procedure to retain data}%
            \label{alg:backtracking}%
            \begin{algorithmic}[1]
                \Procedure{Backtracking}{$\mathcal{D}$, \textsc{Event Trigger}}
                    \State $\hat{\mathcal{D}}_{\text{keep}} \gets \{\}$ \Comment{Initialize an empty data set}
                    \For{$i \gets |\mathcal{D}| -1$ \textbf{downto} $0$}
                        \State $(y_{i}, \optvar_{i}) \gets \mathcal{D}[i]$ \Comment{Extract input-output pair}
                        \State $\gamma_{\text{reset}} \gets$ \Call{Event Trigger}{$\hat{\mathcal{D}}_{\text{keep}}$, $(y_{i}, \optvar_{i})$}
                        \If{$\gamma_{\text{reset}}$ \textbf{or} $|\hat{\mathcal{D}}_{\text{keep}}| \geq 2\cdot d$}
                            \State \textbf{break} \Comment{Stop backtracking}
                        \EndIf
                        \State $\hat{\mathcal{D}}_{\text{keep}} \gets \hat{\mathcal{D}}_{\text{keep}} \cup \{(y_{i}, \optvar_{i})\}$
                    \EndFor
                    \State \Return $\hat{\mathcal{D}}_{\text{keep}}$
                \EndProcedure
            \end{algorithmic}
        \end{algorithm}
    \end{minipage}
\end{wrapfigure}
To test this approach, we conduct within-model comparisons for different dimensions summarized in \fig\ref{fig:backtracking}.
Retaining some of the past data proves to be very beneficial.
In \fig\ref{fig:backtracking}~(left), we can see that by using Algorithm~\ref{alg:backtracking}, \algname is almost able to match the performance of TV-GP-UCB even without prior knowledge of $\varepsilon$.
Notably, combining data reuse and smooth changes can yield consecutive resets, occasionally transforming the method into an adaptive sliding window approach.
From a computational standpoint, this poses no significant overhead, as backtracking through the dataset and evaluating the event trigger are computationally efficient.
\fig\ref{fig:backtracking}~(right) shows that as the dimensions increase, using a strategy that reuses some of the past data becomes more beneficial.
The results for more forgetting factors are in Appendix~\ref{sec:design}.
\section{Conclusion and Future Work}\label{sec:conclussion}

This paper presented the novel concept of event-triggered resets to optimize time-varying objective functions with \gls{tvbo}.
Our theoretical results show that algorithms that reset between lower and upper thresholds yield meaningful regret even if the rate of change is unknown.
This allowed us to optimize the actual reset timing for increased empirical performance using probabilistic error bounds. % with the algorithm \algname.
The resulting algorithm, \algname, adapts to the realized changes in the objective function without relying on exact prior knowledge of these temporal changes, making it especially useful in real-world scenarios.
We empirically showed that the adaptive resetting of \algname consistently outperforms periodic resetting and further demonstrated \algname's increased performance on real-world benchmarks.
Lastly, we presented practical extensions to our algorithm.
These extensions included an approach for practical online hyperparameter estimation within the event trigger framework and an approach to retaining past information, which further increases performance in higher-dimensional settings.
The latter leverages the event trigger not only to detect changes but also to determine how much recent data can be reused, thereby improving sample efficiency.
Overall, \algname offers an adaptive solution for optimizing time-varying objectives with minimal hyperparameter tuning and prior knowledge required, making it a valuable tool for many time-varying optimization problems.

% \paragraph{Future Work}
An interesting direction for future work on \algname is its further theoretical analysis.
While Corollary~\ref{coro:scaling} ensures the desirable scaling of the regret for worst-case resets, it does not reflect the improved empirical regret when increasing the window size as the event trigger is not explicitly included in the analysis.
Regret bounds that explicitly consider the choice of event trigger are a promising direction for future research and would bridge the current gap between theoretical analysis and empirical performance.
For further discussion, see Appendix \ref{sec:discussion}.
Another research direction would be to move away from a setting solely focused on dynamic regret and consider \eg a setting with multiple comparators that are fixed for certain window lengths \citep{zhang2020minimizing}.
Here, we believe that adaptive approaches as proposed herein can be especially beneficial.
Lastly, investigating different event triggers within the framework of \defn\ref{def:event-trigger_framework} is interesting, such as event triggers that consider multiple time steps in their test and threshold function to, \eg identify slow trends in the objective.
New and tailored data augmentation strategies may also arise with new event triggers or event triggers tailored to specific applications.

\section*{Acknowledgments}
The authors thank Pierre-François Massiani, Dominik Baumann, Christian Fiedler, and Noel Brindise for their helpful comments and discussions, as well as the reviewers and action editor for their constructive feedback, which helped improve the final version of the paper.
This work is partially funded by the Deutsche Forschungsgemeinschaft (DFG, German Research Foundation)--RTG 2236/2 (UnRAVeL).
Simulations were performed with computing resources granted by RWTH Aachen University under project rwth1579.

\bibliography{main}
\bibliographystyle{tmlr}

%%%%%%%%%%%%%%%%%%%%%%%%%%%%%%%%%%%%%%%%%%%%%%%%%%%%%%%%%%%%
\setcounter{page}{11}
\newpage

\appendix
\begin{center}
    {\Large \bf Supplementary Material for ``Event-Triggered Time-Varying Bayesian Optimization''}
\end{center}
\setcounter{tocdepth}{2}  % Exclude subsubsections from the ToC

\startcontents
\printcontents{}{1}{}
\setcounter{secnumdepth}{2}
% \newpage

\vspace{0.1cm}
\rule{\textwidth}{0.4pt} % Adjust width and thickness as needed
% \vspace{0.1cm}

Following is the technical appendix. % for the paper of \textit{Event-Triggered Time-Varying Bayesian Optimization}.
Note that all citations here are in the bibliography of the main document, and similarly for many of the cross-references.

\clearpage
\section{Additional Results and Details on Experimental Design}\label{sec:design}

This section contains details about the design of the experiment for the results in Section~\ref{sec:empirical_experiments} for reproducibility.

\subsection{Creating Within-Model Objective Functions} \label{sec:app_wm}

To create the within-model objective functions, we take two different approaches.
For the two-dimensional objectives in \sect\ref{sec:within-model}, we draw sample paths $g_i$ from a \gls{gp} with the corresponding hyperparameters on a dense equidistant grid of the compact set $\X = [0, 1]^2$ with $100 \times 100$ input locations.
To create the time-varying objective function $f_t$ we then combine the sample paths following Assumption~\ref{ass:markov_model} with a specified rate of change $\varepsilon$ and perform linear interpolation in the dense grid.
For the higher-dimensional within-model objective functions in \sect\ref{sec:extentions}, this procedure is not feasible.
Therefore, we approximate prior samples $g_i$ with a $M$ random Fourier features (RFFs) following \cite{rahimi2007random}.
This yields a set of parametric functions 
\begin{equation}
    g_i(\optvar) = \sum_{m=1}^M w_m \phi_m(\optvar) \quad \text{with} \quad \phi_m(\optvar) = \sqrt{\frac{2}{M}} \cos(\bm{\theta}_m^\top \optvar + \tau_m).
\end{equation}
Here, $\bm{\theta}_m$ are sampled proportional to the kernel’s spectral density and $\tau_m \sim \mathcal{U}(0, 2\pi)$.
These approximate sample paths again can be combined following Assumption~\ref{ass:markov_model} to yield the objective function $f_t(\optvar)$ given $\varepsilon$.
In all experiments in \sect\ref{sec:extentions}, we use $M=1028$ RFFs.
This approach does not require interpolation in the input space since the approximate sample paths are parametric.

\subsection{Additional Results on Within-Model Objective Functions}

\begin{wrapfigure}[14]{r}{0.5\textwidth} % 'r' for right, width is 50% of text width
    \vspace{-1cm}
    \begin{minipage}{0.5\textwidth}
        \centering
        \includegraphics[width=\linewidth]{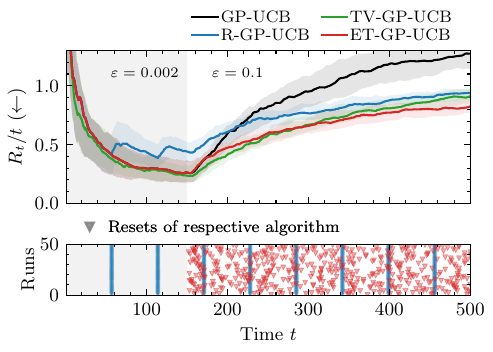}
        \caption{Within-model with change in $\varepsilon$.}
        \label{fig:change}
    \end{minipage}
\end{wrapfigure}
Here, we list some of the additional results on within-model objective functions.
On the right in Figure \ref{fig:change}, we conducted an experiment with a change in the rate of change after \num{175} time steps (segment highlighted in gray).
We initialize all algorithms that require epsilon as an input to their algorithm with the rate of change at time step \num{0}.
In this experiment, the adaptive nature of our algorithm becomes evident.
After the change in the rate of change $\varepsilon$, \algname starts to increase the reset frequency adjusting to the higher rate of change in the unknown objective.
We can observe how TV-GP-UCB with the too small rate of change starts to diverge over time.
While the difference for 500 time steps is not as big, there is a clear diverging trend.

We next further visualize the results from Table~\ref{tab:within-model} in Figure~\ref{fig:within_model}.
Here, we only list the best-performing variant of ET-GP-UCB to increase the readability of the plot.
Lastly, \fig\ref{fig:backtracking_all} shows the results of using Algorithm~\ref{alg:backtracking} to keep some of the past information for different rate of changes.
We can observe that using the backtracking procedure improves the performance for all dimensions for the given rate of changes.

\begin{figure*}[t]
    \centering
    \includegraphics[]{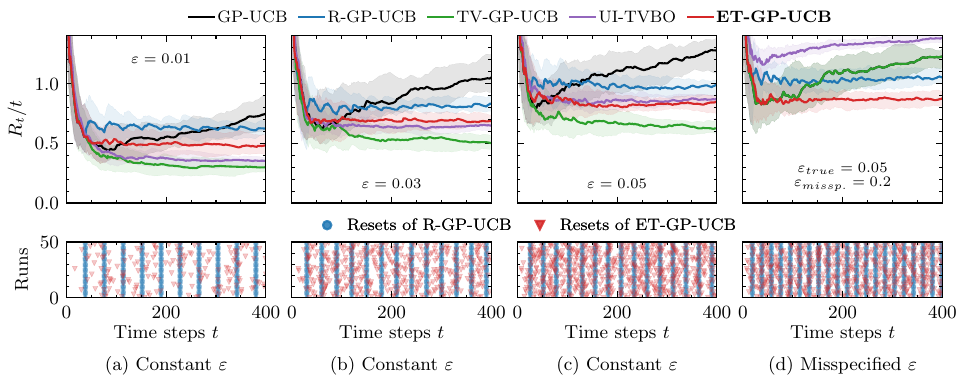}
        \caption{\textit{Top row:} performance of the different algorithms in terms of normalized regret.
        The shaded areas show one standard deviation of the respective algorithms.
        \textit{Bottom row:} reset timings of R-GP-UCB and \algname.
        % whats in it
        In (a) to (c), rate of change is fixed for all time steps, and in (d), it is misspecified for TV-GP-UCB, UI-TVBO, and R-GP-UCB.
        % what to see
        In (a) to (c), TV-GP-UCB is the full information reference, showing the best possible UCB performance with known constant $\varepsilon$.
        We compare \algname to R-GP-UCB, the latter of which uses the true $\varepsilon$ to determine its periodic reset time.
        For \algname, we use the same parameter $\triggerDelta$ in the event trigger for the test cases (a) to (d).
        In all cases, \algname outperforms R-GP-UCB.
        It also outperforms TV-GP-UCB and UI-TVBO if $\varepsilon$ is misspecified, as shown in (d), since \algname does not rely on prior information on $\varepsilon$.
        The plots show the median performance and the interquartile.}
        \label{fig:within_model}
\end{figure*}

\begin{figure*}[h]
    \centering
    \includegraphics[]{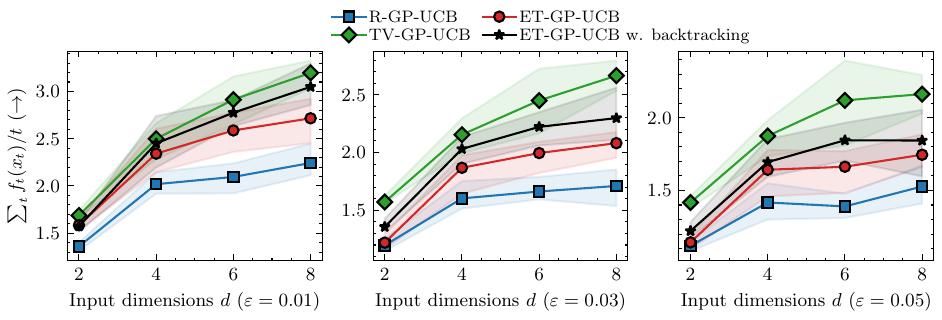}
        \caption{Benefit of reusing some of the past information through Algorithm~\ref{alg:backtracking} for different $\varepsilon$.}
        \label{fig:backtracking_all}
\end{figure*}

\subsection{Comparison to Performance-based Triggers}
Another idea that arises naturally for an event trigger is to consider a reset strategy that is solely based on an improvement in performance.
This strategy is inspired by trust-region BO, TuRBO \citep{eriksson2019scalable}, where a lack of improvement ``triggers'' a refinement of the search domain.
We apply a variant of this mechanism to the time-varying setting: If an algorithm is not able to improve its performance for $\tau_{\text{fail}} \in \N_+$ time steps in a row, the event trigger resets the data set.
To set this threshold, we use the same heuristic proposed in the TuRBO paper and set $\tau_{\text{fail}}=\lceil D / q \rceil = D$ (c.f. Appendix D in \citet{eriksson2019scalable} and our batch size is $q=1$ given our problem setting).
We refer to this event trigger as TuRBO Trigger.
To test its performance, we ran the same benchmark from the previous section, \ie multiple within-model functions generated using RFFs (\cf Section~\ref{sec:app_wm}) with different rates of change and different dimensions.
The results are in Figure~\ref{fig:performance_trigger}.
\begin{figure*}[h]
    \centering
    \includegraphics[width=1\linewidth]{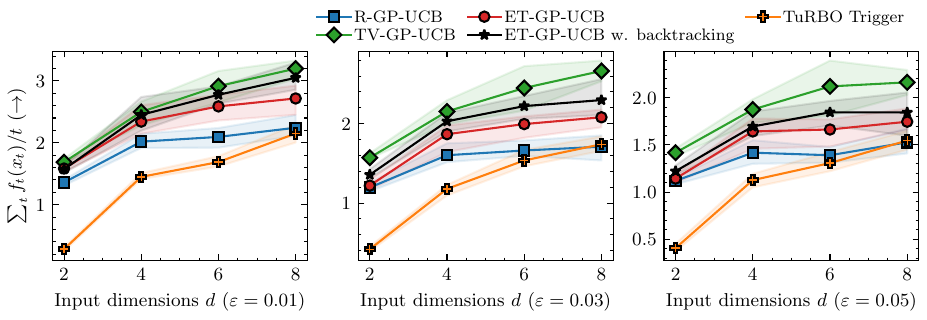}
    \caption{Comparison to a performance-based event trigger (TuRBO Trigger).}
    \label{fig:performance_trigger}
\end{figure*}

We can observe that for all rates of change, the TuRBO trigger performs worse than the other baselines.
Unlike in TuRBO, a trigger event significantly impacts performance since all data is discarded (while shrinking the trust region for TuRBO is less drastic).
This is especially the case for lower-dimensional objective functions where we only allow for a few failures before the dataset is reset.
An additional factor is that the objective function is noisy.
Therefore, even consecutive evaluations at the optimum of a non-changing function could yield a reset, which, in contrast to TuRBO, requires significant new data to recover from.
While performance-based resets hold promise as they are not as tightly coupled to the prior assumptions of the GP, they require further in-depth investigation and algorithm design for the dynamic settings to yield well-performing approaches by further including heuristics.
Also, further tuning of $\tau_{\text{fail}}$ for each problem could improve its performance.

\subsection{Performance on Standard Optimization Benchmarks}

Next, we will test our event trigger on some standard benchmarks, specifically Ackley ($d=2$) and Hartmann ($d=6$), which are popular for testing BO algorithms.
The event trigger in \algname is tightly coupled to the GP prior.
We, therefore, expect some false positives of our event trigger on these benchmarks. 
Similar to Section~\ref{sec:temperature-data} and Section~\ref{sec:policy_search}, the comparisons in this section aim to test to what extent resets affect the empirical performance.
For Ackley, we set the time horizon to $T=60$ and induce a shift in the optimum after $T/2$ time steps. 
Similarly, we set Hartmann's time horizon to $T=200$ and induce the shift after $T/2$ time steps.
To induce this shift, we perturb the input of the objective function by a constant factor.

We use the same UCB acquisition function stated in Section~\ref{sec:empirical}, run ten different seeds, and learn the lengthscales $\ell_i$ online for all baselines.
For the lengthscales, we use corse bounds as $\ell_i \in [0.005, 4.]$ \citep{eriksson2019scalable} and standardize the output data at each time step given the current data set which follows best practices in BO \citep{Balandat2020}.
Since we learn the lengthscales online, we apply the ``learn-then-monitor'' approach introduced Section~\ref{sec:extentions} for the \algname baselines and use the same $\delta_\mathrm{B}$ as in all previous experiments.
For TV-GP-UCB, we set the rate of change to $\varepsilon=0.01$ to yield good performance in the static setting but still allow for some adaptation.

\begin{figure*}[t]
    \centering
    \hspace{-0.2cm}\includegraphics[width=1\linewidth, trim={0, 65, 0, 0}, clip]{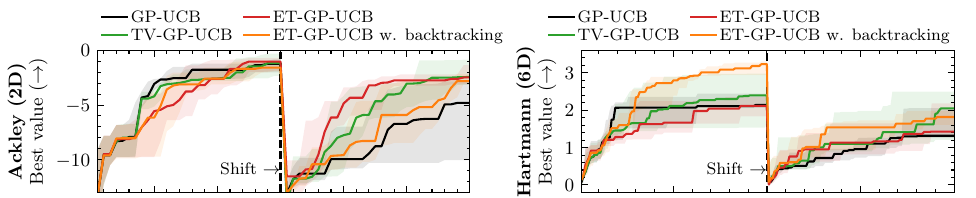}
    \includegraphics[width=1\linewidth, trim={0, 0, 0, 22}, clip]{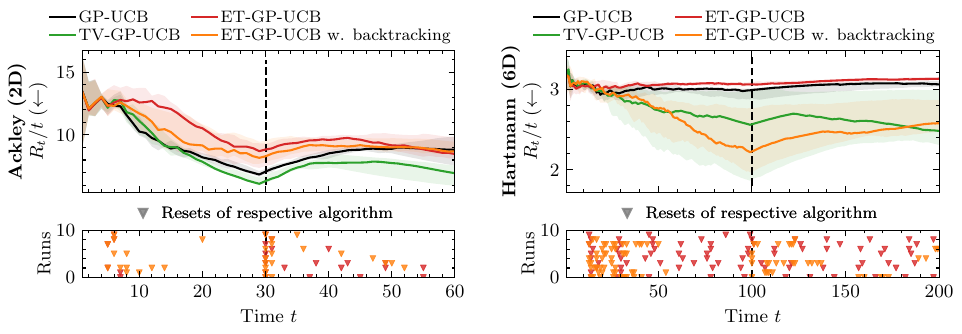}\caption{Comparison on the standard benchmarks Ackley (left) and Hartmann (right).
    We show the best obtained value on top and the normalized cumulative regret on the bottom.
    We reset the best value an algorithm as observed after the shift (indicated by the dashed line) to test how the algorithms cope with the shift in the optimum.
    All plots show the median performance as well as the interquantiles.}
    \label{fig:standard_benchmarks}
\end{figure*}

Figure~\ref{fig:standard_benchmarks} shows the performance of all baselines.
For Ackley, we can observe that at the beginning of optimization, there are some resets of both \algname variants.
However, we can also observe that these resets do not significantly impact the final performance (before the change) when considering simple regret.
After the shift, we can observe that all the time-varying baselines can adjust to the change in the objective, and as expected, GP-UCB struggles to cope with the stale data in the data set.

% The results on Hartmann are very surprising and intriguing.

On Hartmann, the event-triggered baseline without backtracking struggles and is outperformed by GP-UCB.
Similar to Ackley, we can observe early resets of both \algname variants.
In this higher-dimensional example, these resets are more costly for vanilla \algname.
For the backtracking variant, we can observe consecutive resets resulting in an event-triggered sliding window (\cf Section~\ref{sec:extentions}).
With this backtracking mechanism, the \algname variant outperforms GP-UCB significantly in terms of simple and cumulative regret.
Similar to the Ackley example, most of the time-varying baselines can better adjust to the shift in the optimum compared to GP-UCB.
Noteworthy is also that \algname with backtracking significantly outperforms GP-UCB in terms of simple regret even before the change.
Removing some initial data allows this variant to achieve better GP hyperparameters and converge to the optimum.
This result raises further questions on how classic BO algorithms should deal with past data for better convergence.

In summary, we observed that \algname does yield additional resets when optimizing an objective function that is not well explained by the GP prior, but the impact on final performance is relatively small for the considered test functions when leveraging the proposed backtracking mechanism.
We further observed that this mismatch in prior and objective functions can also harm static algorithms.
Additionally, we observe that the same algorithms developed to tackle time-varying problems may also be helpful for static problems to overcome some problems with past data points, which is an exciting research direction.
\subsection{Comparisons to Algorithms in the Finite Variational Budget Setting} \label{sec:dengetal}

In this section of the appendix, we compare ET-GP-UCB to W-GP-UCB \citep{Deng2022}, and BR-GP-UCB and SW-GP-UCB \citep{Zhou2021}.
As discussed in Section~\ref{sec:related_work}, their underlying assumptions differ from ours in that they assume a fixed variational budget.
Therefore, a theoretical comparison is not reasonable, but an empirical comparison, especially on real-world data, is interesting.
As the baseline that does not consider time-variations, we also compare against IGP-UCB \citep{Chowdhury2017}, which is an improved form of the agnostic GP-UCB of \citet{Srinivas2010}.
For the comparison, we use the code base submitted by \citet{Deng2022} in the supplementary material of \texttt{https://proceedings.mlr.press/v151/deng22b}.
Their experiments consist of: 
\begin{compactitem}
    \item[(a)] an one-dimensional experiment with sudden changes at time steps $t=100$ and $t=200$,
    \item[(b)] an one-dimensional experiment with a slowly changing objective function,
    \item[(c)] an experiment on real-world stock market data.\footnote{Note that stock market data does not fit the fixed variational budget setting as stock prices constantly change. However, since this example was used in prior work, we treat it as a benchmark but wanted to highlight that \textit{ongoing change} is the more suitable assumption.}
\end{compactitem}
We refer to \citet{Deng2022} for the details on how the objective function for (a) and (b) is composed.
We implemented our algorithm ET-GP-UCB in their \textsc{Matlab} code base and added random search as a baseline.

While going through the implementation details, we noted that in the stock market data benchmark, the data was not normalized and values of the objective function ranged from $19$ to $250$.
This is problematic as all algorithms use a prior mean of zero and a prior variance of one.
Thus, all observations will have a very small likelihood under the given model.
Secondly, the noise variance was set to $300$, which seems high for stock data where observation noise should be low.
The high noise variance combined with the prior signal variance of one also explains the very high standard deviations of all the algorithms in \citet[Figure~1~(c)]{Deng2022}.
Overall, the stock market data has a very low marginal likelihood for the model chosen in \cite{Deng2022}.
Therefore, we decided to improve the model and add a preprocessing step to the stock market data example by normalizing the data to a mean of zero and a variance of one to fit the prior model assumptions of all algorithms.
Next, we chose a noise variance of $0.01$ to account for smaller short-term fluctuations in stock prices.
We also changed the noise variance of the synthetic experiments (a) and (b) to $0.1$.
It was previously set to $1$ (same as the signal variance), amounting to a 'signal to noise ratio' of $1$, which is very low for practical optimization problems.
We then re-ran their experiments using $20$ runs for the synthetic experiments and $5$ runs for the stock market data as in \citet{Deng2022}.
For ET-GP-UCB, we use the same parameters for the event trigger as in all previous experiments, as well as no hard upper and lower bounds, as this resulted in the best empirical performance in Section~\ref{sec:empirical_experiments}.
Note that all other time-varying algorithms rely on specifying the amount of change $B_T$ a priori, which is hard to estimate in practice.
The results are shown in Figure~\ref{fig:deng_refactored} and regret at the last time step is listed in Table~\ref{tab:results_deng}.

\begin{figure*}[t]
    \centering
    \includegraphics[]{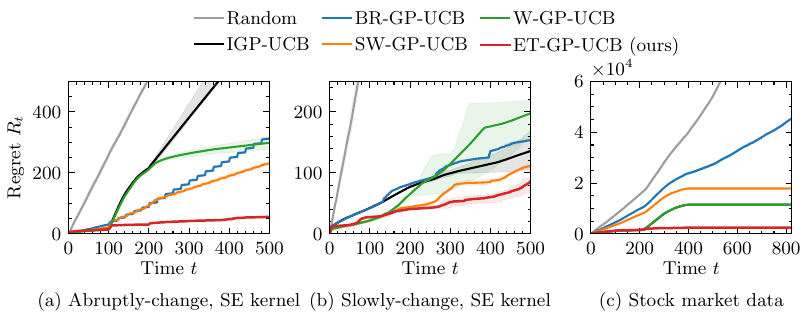}%
    \caption{Performance using the refactored \gls{gp} prior in the experiments of \citet{Deng2022}.}
    \label{fig:deng_refactored}
\end{figure*}

We can see that ET-GP-UCB can outperform all baselines in experiments Figure~\ref{fig:deng_refactored}~(a) to (c).
Especially in Figure~\ref{fig:deng_refactored}~(a), ET-GP-UCB can detect the sudden changes in the objective function and then quickly find the new optimum due to the simple one-dimensional objective function.
In \fig\ref{fig:deng_refactored}~(b), ET-GP-UCB is the best performing algorithm, but W-GP-UCB is competitive.
Considering the good performance of IGP-UCB, we can also conclude that the changes in the objective function over the time horizon are relatively small.
In \fig\ref{fig:deng_refactored}~(c), ET-GP-UCB is again the best-performing algorithm.
We also observe that in Figure~\ref{fig:deng_refactored}~(c) \textit{all} algorithms significantly improve in performance and yield a smaller standard deviation compared to \citet[Figure~1~(c)]{Deng2022} because the GP prior model is more suitable for modeling the objective function.
After approx. $500$ time steps ET-GP-UCB, W-GP-UCB, SW-GP-UCB, and IGP-UCB (see Table~\ref{tab:results_deng}), no longer accumulate regret.
Looking at the objective function, this is expected as the optimal bandit arm no longer changes and the optimal values do not change significantly.

\begin{table}[h]
\caption{Comparison on experiments in \citet{Deng2022}.}
\label{tab:results_deng}
\begin{center}
\begin{tabular}{l r@{}l r@{}l r@{}l}
\toprule
&\multicolumn{2}{c}{(a) Abruptly-change} & \multicolumn{2}{c}{(b) Slowly-change} & \multicolumn{2}{c}{(c) Stock market} \\
\textbf{Algorithm} & \multicolumn{2}{c}{\textbf{Regret} $(\mu \pm \sigma)$} & \multicolumn{2}{c}{\textbf{Regret} $(\mu \pm \sigma)$} & \multicolumn{2}{c}{\textbf{Regret} $(\mu \pm \sigma)$} \\
\midrule
IGP-UCB & $\quad704.6$&$\pm 238.9$ & $\quad136.0$&$\pm 32.2$ & $11\,462.3$&$\pm 56.0$ \\
BR-GP-UCB & $312.5$&$\pm 4.3$ & $153.9$&$\pm 11.0$ & $45\,362.2$&$\pm 430.1$\\
SW-GP-UCB & $233.2$&$\pm 9.1$ & $111.6$&$\pm 15.7$ & $17\,854.9$&$\pm 27.9$ \\
W-GP-UCB & $298.8$&$\pm 27.3$ & $197.0$&$\pm 29.7$ & $11\,461.3$&$\pm 32.9$ \\
ET-GP-UCB (ours) & $\bm{55.8}$&$\pm 2.7$ & $\bm{86.3}$&$\pm 42.8$ & $\bm{2\,483.8}$&$\pm 39.8$ \\
\bottomrule
\end{tabular}
\end{center}
\end{table}

\subsubsection{Results with the Original Gaussian Process Prior}
Below in \fig\ref{fig:deng_original} are the results using the original \gls{gp} prior from \cite{Deng2022}.
We can observe that for \fig\ref{fig:deng_original}~(a), ET-GP-UCB can reliably detect the first change in the objective function at $t=100$.
However, due to the low signal-to-noise ratio, ET-GP-UCB can only detect the second change approximately $50\%$ of the time.
It still outperforms periodic resetting and the sliding window approach.
In \fig\ref{fig:deng_original}~(b), ET-GP-UCB shows a similar performance as W-GP-UCB.
Both examples, however, highlight that ET-GP-UCB is not as dominant as in previous examples if the signal-to-noise ratio is too low since $\bar w_t$ will be large relative to $|y_t- \mu_{\mathcal{D}_t}(\optvar_t)|$.
For the stock market data, we observe the discussed large standard deviations.
All algorithms outperform the random baseline.
However, we can observe that the ordering of algorithms has changed compared to \citet[Figure~1~(c)]{Deng2022} even though we used their implementations and only added ET-GP-UCB and the random baseline to the run script.
Further, changing the random seed again changed this ordering of the algorithms, which indicates that the prior model assumptions yield random-like behavior as the prior does not allow the \gls{gp} model to make meaningful decisions.

\begin{figure*}[h!]
    \centering
    \includegraphics[]{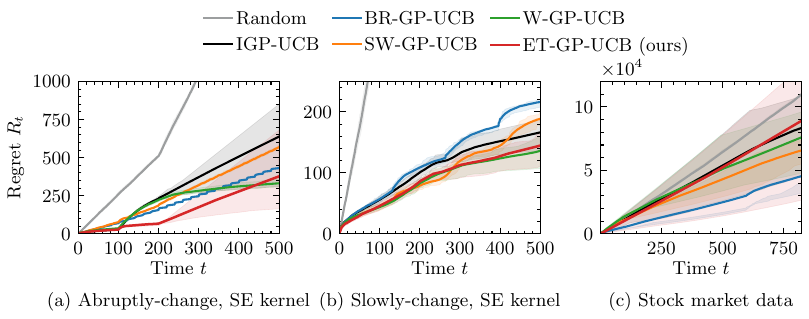}%
    \caption{Performance using the original \gls{gp} prior for the experiments of \citet{Deng2022}.}
    \label{fig:deng_original}
\end{figure*}

\clearpage
\section{Sensitivity of ET-GP-UCB to the Event Trigger Hyperparameter \texorpdfstring{$\delta_{\mathrm{B}}$}{}} \label{sec:sensitivity}

\begin{figure}[t]
    \centering
    \includegraphics[width=1\linewidth]{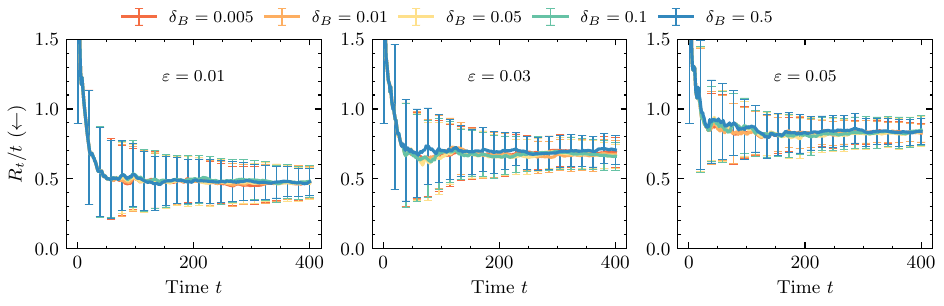}
    % Caption in red
    \caption{Sensitivity of our event trigger from \eqref{eq:threshold_function} with respect to its hyperparameter $\delta_\mathrm{B}$.}
    \label{fig:sensitivity}
\end{figure}

Table~\ref{tab:sensitivity_delta} and Figure~\ref{fig:sensitivity} show the sensitivity of ET-GP-UCB to the event trigger parameter $\delta_\mathrm{B}$ in Lemma~\ref{lem:trigger_bound}.
To solely focus on the influence of $\delta_\mathrm{B}$ with use the maximum window size as $W = T$.
It should be noted, that for different window sizes, different $\delta_\mathrm{B}$ may be optimal for the best possible empirical performance as both, the location of lower and upper reset bounds, and the hyperparameter $\delta_\mathrm{B}$, influence how much (and where) probability mass from the distribution over reset times is cut off.
For each $\delta_\mathrm{B}$, we ran $50$ different seeds on within-model objective functions (\cf Section~\ref{sec:app_wm}).
All hyperparameters are listed in Table~\ref{tab:sensitivity_hypers}.
As $\delta_\mathrm{B}$ is increased, the average number of resets increases as the probabilistic uniform error bound is softened.
We observe that for different rates of changes $\varepsilon$ different $\delta_\mathrm{B}$ may be optimal to obtain the lowest possible empirical regret.
From Figure~\ref{fig:sensitivity}, we can observe that for higher rates of change a smaller $\delta_\mathrm{B}$ can yield a smaller variance in the final regret.
In summary, ET-GP-UCB displays a low sensitivity to $\delta_\mathrm{B}$ highlighting that our algorithm is readily applicable without the need for excessive hyperparameter tuning.
In practice, we found $\delta_\mathrm{B}=0.1$ to yield good empirical performance in all of our experiments.

\begin{table}[h]
\caption{Sensitivity of ET-GP-UCB to $\delta_{\mathrm{B}}$ in Lemma~\ref{lem:trigger_bound}.}
\label{tab:sensitivity_delta}
\begin{center}
\begin{small}
\begin{tabular}{c c c c}
\toprule 
\textbf{Rate of Change} $\varepsilon$ & \textbf{\algname Hyperparameter} $\delta_{\mathrm{B}}$ & \textbf{Avg. No. of Resets} & \textbf{Avg. Regret} $R_T$ \\
\midrule
        &$0.005$ & $2.66$ & $191.39$ \\
        &$0.01$  & $2.96$ & $193.05$\\
$\varepsilon=0.01$  &$0.05$  & $3.20$ & $200.16$ \\
        &$0.1$   & $3.38$ & $200.33$ \\
        &$0.5$   & $3.98$ & $196.03$ \\
\midrule
        &$0.005$ & $6.42$ & $276.84$ \\
        &$0.01$  & $6.82$ & $273.37$\\
$\varepsilon=0.03$  &$0.05$  & $7.72$ & $269.01$ \\
        &$0.1$   & $8.04$ & $271.59$ \\
        &$0.5$   & $10.32$ & $280.05$ \\
\midrule
        &$0.005$ & $9.60$ & $331.69$ \\
        &$0.01$  & $9.96$ & $328.74$\\
$\varepsilon=0.05 $ &$0.05$  & $11.40$ & $329.47$ \\
        &$0.1$   & $11.88$ & $332.04$ \\
        &$0.5 $  & $14.44$ & $334.80$ \\
\bottomrule
\end{tabular}
\end{small}
\end{center}
\vskip -0.1in
\end{table}
\newpage
\newpage
\section{Discussion on the Theoretical Analysis and Future Directions}\label{sec:discussion}

The current theoretical analysis does not explicitly include the event trigger~\eqref{eq:threshold_function}. 
As stated in Section~\ref{sec:conclussion}, we are interested in tighter regret bounds, tailored to the specific event trigger. This has the potential for additional insights into the design of new triggers.
In the following, we list some of the challenges that may arise when conducting this theoretical investigation and give some pointers that may be helpful to derive tighter bounds. 

\paragraph{Reset times are stopping times.}
First, note that the activation time of the event trigger is a random variable as 
$
    \stopT \coloneqq \min\{t\in \mathbb{N}: \psi_t(\optvar_t) > \kappa_t(\optvar_t)\}.
$
This random variable $\stopT$ is a stopping time, as $\psi_t(\optvar_t)$ is a discrete-time stochastic process and the trigger event is completely determined by the information known up to $t$.
The stochastic process depends recursively on the acquisition function, which makes an analytical study of the stopping time difficult; even for simple stochastic processes, calculating expected values of stopping times is typically intractable.
\paragraph{The regret analysis should be specific for the event trigger.} There is some information available on $\stopT$: As long as $T$ is finite, $\stopT$ is a bounded random variable as $\stopT \in [1, T]$.
This also means, that $\stopT$ is a sub-Gaussian random variable.
One can use concentration inequalities of sub-Gaussian random variables to bound the maximum of $\stopT$ with high probability and then use this bound in the theoretical analysis. 
For the analysis to be specific for the event trigger, this concentration inequality has to depend on the test and threshold functions of the event trigger which, as stated above, is in itself very challenging:
Random sampling block lengths within $[1, T]$ have the same sub-Gaussian property, \ie with the same sub-Gaussian constant.
However, the empirical performance would be very different compared to a sophisticated event trigger.
Using standard bounds for sub-Gaussian random variables would therefore likely still result in a mismatch between empirical performance and theoretical analysis.
\paragraph{Bounding the random block sizes and number of block simultaneously is promising.}
Once a \emph{event trigger specific} bound on the stopping time is established, the theoretical analysis would still require a fundamentally new way of quantifying the model mismatch compared to Theorem~\ref{theorem:regret_bounds}.
The current analysis bounds the maximum model mismatch by considering the maximum block length at every time step.
Here, one could substitute in the bound on the stopping time $\stopT$ (which depends on $T$).
However, the bound only holds with high probability and, therefore, for a valid probabilistic statement, a union bound over all time steps is required.
This union bound can result in a super-linear scaling of the regret bound.
An interesting direction for future work would be instead of considering the maximum model mismatch at each time step to bound the composition of the number of blocks and size of the blocks simultaneously.
%\end{compactenum}

\clearpage
\section{Influence of the window size on the distribution of resets for different \texorpdfstring{$\varepsilon$}{rates of change}} \label{sec:reset_appendix}

In Figure~\ref{fig:stopping_time} (left), we displayed the influence of the window size on the distribution of reset times.
In Figure~\ref{fig:app_influence}, we show this influence also for different $\varepsilon$. We further add the empirical cumulative distribution in the right column.

\begin{figure*}[h]
    \centering
    \begin{subfigure}[t]{0.49\textwidth}
        \centering
        \includegraphics[]{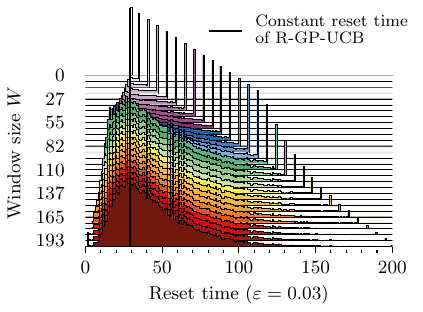}%
    \end{subfigure}%
    \,
    \begin{subfigure}[t]{0.49\textwidth}
        \centering
        \includegraphics[]{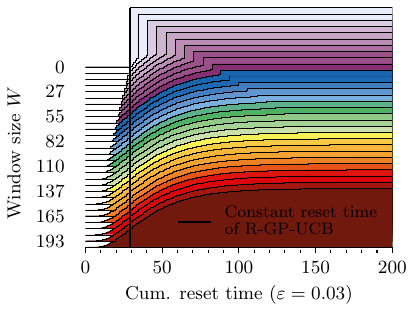}%
    \end{subfigure}%
    
    \centering
    \begin{subfigure}[t]{0.49\textwidth}
        \centering
        \includegraphics[]{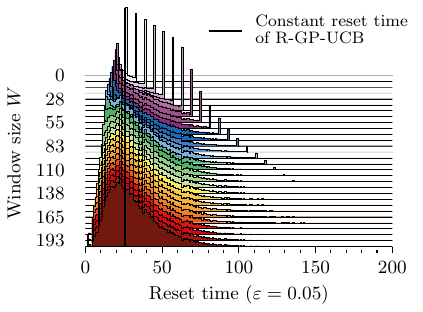}%
    \end{subfigure}%
    \,
    \begin{subfigure}[t]{0.49\textwidth}
        \centering
        \includegraphics[]{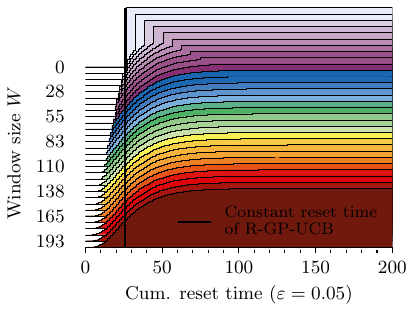}%
    \end{subfigure}%
    
    \centering
    \begin{subfigure}[t]{0.49\textwidth}
        \centering
        \includegraphics[]{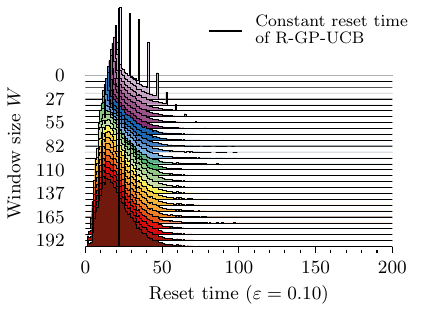}%
    \end{subfigure}%
    \,
    \begin{subfigure}[t]{0.49\textwidth}
        \centering
        \includegraphics[]{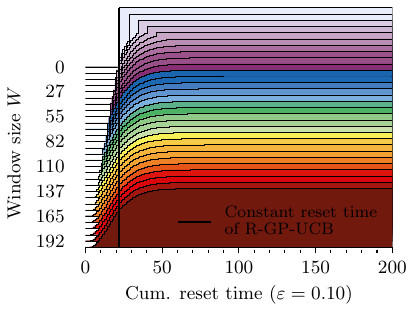}%
    \end{subfigure}%
    \caption{\textit{Left column:} Influence of the window size on the empirical PDF of reset times.
    \textit{Right column:} Influence of the window size on the empirical CDF.}
    \label{fig:app_influence}
\end{figure*}%

\newpage

\section{Overview of the Temperature Data} \label{sec:train_days}
In \fig\ref{fig:temps}, we show the normalized temperature data used for the empirical kernel in the results in \fig\ref{fig:temperature_results}. The full pre-processing of the temperature data is described in the corresponding jupyter-notebook in the provided code base.
\begin{figure}[h]
    \centering
    \includegraphics[]{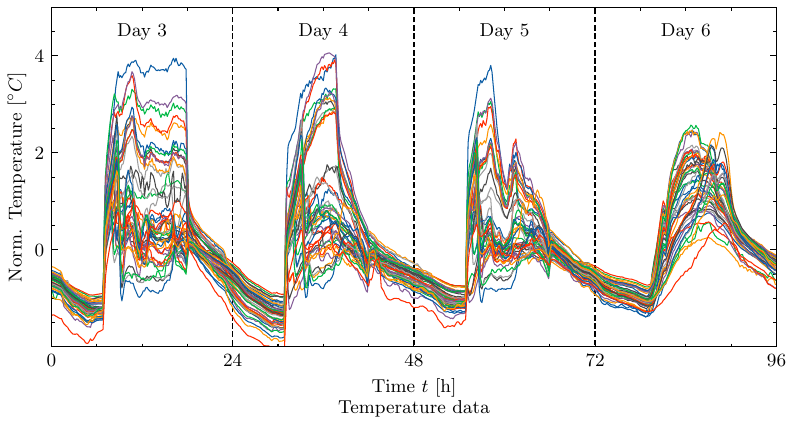}
    \caption{Normalized temperature data used for the results in \fig\ref{fig:temperature_results}. Each color resembles the measurement data from one sensor.}
    \label{fig:temps}
\end{figure}

\section{Proof of Theorem~\ref{theorem:regret_bounds} and Corollary~\ref{coro:scaling} from Section~\ref{sec:theory}} \label{sec:proof}

\subsection{Proof of Theorem~\ref{theorem:regret_bounds}}

\begin{reptheorem}{theorem:regret_bounds}
    Let the domain $\X \subset [0, r]^d \subset \R^d$ be convex and compact with $d\in \mathbb{N}_+$ and let $f_t$ follow Assumption~\ref{ass:markov_model} with a kernel $k(\optvar,\optvar')$ such that Assumption~\ref{ass:smoothness_assumption} is satisfied. Pick $\delta \in (0,1)$ and set 
    $\beta_t = 2 \ln\left( 2\pi^2 t^2 / (3\delta)\right) + 2 d \ln{(t^2 d b_1 r \sqrt{\ln (2d a_1 \pi^2 t^2 / (3\delta))})}$.
    Pick a lower block length $\ubar{N} \in \mathbb{N}_+$ and a upper block length $\bar{N} \in \mathbb{N}_+$ with $\ubar{N} \leq \bar{N} \leq T$.
    Let $\gamma_{\bar{N}}$ be the maximum information gain over $\bar{N}$ points.
    Then, running any algorithm with block sizes between $\ubar{N}$ and $\bar{N}$ satisfies
	\begin{align}
		R_T \leq \sqrt{C_1 T \beta_T \left( \frac{T}{\ubar{N}} + 1\right)\gamma_{\bar{N}}} + 2 + T\phi_T(\varepsilon, \bar{N})
	\end{align}
	with probability at least $1 - \delta$, where ${C_1 = 8/\ln(1+\noisevar^{-2})}$ and we defined
	\begin{align}
 \textstyle
		\phi_T(\varepsilon, \bar{N}) \coloneqq  2 \sqrt{\beta_T (3\noisevar^{-2} + \noisevar^{-4}) \bar{N} ^3 \varepsilon} + 2 (\noisevar^{-2} + \noisevar^{-4}) \bar{N}^3 \varepsilon (b_0 + \sqrt{2} \noisevar) \sqrt{\ln\frac{4(a_0 + 1) \pi^2 T^2}{3\delta}}. 
	\end{align}
\end{reptheorem}

\begin{proof}
The proof of Theorem~\ref{theorem:regret_bounds} is mainly based on the proof of \citet[Theorem~4.2]{Bogunovic2016} which builds on arguments from \citet{Srinivas2010}.
The key distinction of our analysis is that we will consider the block length not to be fixed but within an interval $[\ubar{N}, \bar{N}]$.
For completeness, we list the full proof here.
As in \citet{Bogunovic2016} and \citet{Srinivas2010}, we first fix a discretization $\mathbb{X}_t \subset \mathbb{X} \subseteq [0, r]^d$ of size $(\vartheta_t)^d$ satisfying
\begin{equation}
    \| \optvar -[\optvar]_t \| \leq\, r d / \vartheta_t, \quad \forall \optvar\in \mathbb{X}_t
\end{equation}
where $[\optvar]_t$ denotes the closest point in $\mathbb{X}_t$ to $\optvar$. Now fix a constant $\delta > 0$ and an increasing sequence of positive constants $\{\pi_t \}_{t=1}^\infty$ satisfying $\sum_{t \geq 1} \pi_t^{-1} = 1$ (\cf Remark~\ref{rem:pi_t}).
Next, we introduce some Lemmas needed to bound the regret.

%%%%%%%%%%%%%%%%%%%%%%%%%%%%%%%%%%%%%%%%
%%%%%%%%%%%%% START LEMMAS %%%%%%%%%%%%%
%%%%%%%%%%%%%%%%%%%%%%%%%%%%%%%%%%%%%%%%

\begin{lemma}[{\citet[][Proof of Theorem 4.2, \cf Appendix C.1]{Bogunovic2016}}] \label{lem:xt_bound}
Pick $\delta \in (0,1)$ and set $\beta_t = 2 \ln \frac{\pi_t}{\delta}$, where $\sum_{t \geq 1} \pi_t^{-1} = 1$, $\pi_t >0$. Then we have
\begin{align}
    \mathbb{P}\left\{| f_t(\optvar_t) - \tilde\mu_{\mathcal{D}_t}(\optvar_t) | \leq\, \sqrt{\beta_t} \tilde\sigma_{\mathcal{D}_t} (\optvar_t), \quad\forall t\geq1 \right\} \geq 1 - \delta.
\end{align}
\end{lemma}

\begin{lemma}[{\citet[][Proof of Theorem 4.2, \cf Appendix C.1]{Bogunovic2016}}] \label{lem:x_bound}
Pick $\delta \in (0,1)$ and set $\beta_t = 2 \ln \frac{|\mathbb{X}_t|\pi_t}{\delta}$, where $\sum_{t \geq 1} \pi_t^{-1} = 1$, $\pi_t >0$. Then we have
\begin{align}
    \mathbb{P}\left\{| f_t(\optvar_t) - \tilde\mu_{\mathcal{D}_t}(\optvar_t) | \leq\, \sqrt{\beta_t} \tilde\sigma_{\mathcal{D}_t} (\optvar_t), \quad\forall t\geq1, \forall \optvar \in \mathbb{X}_t \right\} \geq 1 - \delta.
\end{align}
\end{lemma}

\begin{lemma}[{\citet[][Proof of Theorem 4.2, \cf Appendix D]{Bogunovic2016}}] \label{lem:derivative}
    Pick $\delta \in (0,1)$ and set $L_t = b_1 \sqrt{ \ln \frac{d a_1 \pi_t}{\delta}}$, where $\sum_{t \geq 1} \pi_t^{-1} = 1$, $\pi_t >0$. Then, with Assumption~\ref{ass:smoothness_assumption} we have
    \begin{align}
        \mathbb{P}\left\{\sup_{\optvar\in \X}\left\lvert \frac{\partial f}{\partial x_j} \right\rvert > L_t, \quad \forall t\geq1, \forall \optvar \in \mathbb{X}_t , \forall j \in \{1, \dots, d\} \right\} \leq 1- \delta .
    \end{align}
\end{lemma}

\begin{lemma} \label{lem:bounded_y}
Pick $\delta \in (0,1)$ and set $\bar{y}_{t} = (b_0 + \sqrt{2} \noisevar) \sqrt{\ln\frac{2 (a_0 + 1) \pi_t}{\delta}}$, where $\sum_{t \geq 1} \pi_t^{-1} = 1$, $\pi_t >0$. Then we have
\begin{equation}
    \mathbb{P}\left\{|y_t| \leq\, \bar{y}_{t}, \, \forall t \geq 1\right\} \geq 1 - \delta.
\end{equation}
\end{lemma}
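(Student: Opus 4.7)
The plan is to decompose $|y_t| \le |f_t(\optvar_t)| + |w_t|$ and bound each piece in probability, splitting the failure budget $\delta$ equally between the function term and the noise term.

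For the function term, Remark~\ref{rem:constant_prior} tells us that, under Assumption~\ref{ass:markov_model}, $f_t \sim \GP(0,k)$ marginally for every $t$. Hence Assumption~\ref{ass:smoothness_assumption} applies to each $f_t$ and gives
\[
\mathbb{P}\Bigl\{\sup_{\optvar\in\X}|f_t(\optvar)| > L_{f,t}\Bigr\} \le a_0 \exp\bigl(-(L_{f,t}/b_0)^2\bigr).
\]
Setting the right-hand side equal to $\delta/(2\pi_t)$ and solving yields $L_{f,t} = b_0\sqrt{\ln(2a_0\pi_t/\delta)}$, so that $|f_t(\optvar_t)| \le L_{f,t}$ with probability at least $1-\delta/(2\pi_t)$. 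A union bound over $t\ge 1$, using $\sum_{t\ge 1}\pi_t^{-1}=1$, gives the uniform statement with failure probability at most $\delta/2$.

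For the noise term, I would mimic the proof of Lemma~\ref{lem:bounded_noise} directly on the unrestarted index $t$ rather than the post-reset index $\tau$: the standard Gaussian tail bound applied to $w_t\sim\mathcal N(0,\sigma_n^2)$ at level $\bar w_t^{\mathrm{id}} = \sqrt{2\sigma_n^2\ln(2\pi_t/\delta)}$ gives $\mathbb{P}\{|w_t|>\bar w_t^{\mathrm{id}}\} \le \delta/(2\pi_t)$, and another union bound with $\sum_{t\ge 1}\pi_t^{-1}=1$ makes this uniform in $t$ at total failure $\delta/2$. Adding the two bounds and applying a final union bound gives, simultaneously for all $t\ge 1$,
\[
|y_t| \le |f_t(\optvar_t)| + |w_t| \le b_0\sqrt{\ln\tfrac{2a_0\pi_t}{\delta}} + \sqrt{2\sigma_n^2\ln\tfrac{2\pi_t}{\delta}} = \bar y_t,
\]
with probability at least $1-\delta$, which is the desired statement.

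There is no real obstacle here: the only thing to be careful about is that Assumption~\ref{ass:smoothness_assumption} is stated for a single $f\sim\GP(0,k)$, so one must invoke Remark~\ref{rem:constant_prior} to apply it at every time index of the Markov chain; and that the $\pi_t$-splitting in the union bound matches the form used throughout the paper so that the constants in $\bar y_t$ agree with those claimed.
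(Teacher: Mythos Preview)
Your proposal is correct and follows essentially the same route as the paper: decompose $|y_t|\le |f_t(\optvar_t)|+|w_t|$, allot $\delta/2$ to each piece, use Assumption~\ref{ass:smoothness_assumption} (via Remark~\ref{rem:constant_prior}) with the $\pi_t$-splitting for the function term, and use the Gaussian tail bound in the style of Lemma~\ref{lem:bounded_noise} with index $t$ for the noise term. Your write-up is slightly more explicit than the paper's (in particular, invoking Remark~\ref{rem:constant_prior} to justify applying the smoothness assumption at every $t$), but the argument is the same.
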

\begin{proof}
We have that $|y_t| \leq\, |f_t(\optvar_t)| + |w_t|$ given Assumption~\ref{ass:noisy_obs}. We can use Lemma~\ref{lem:bounded_noise} (with $t$ instead of $\triggerT$) with $\delta/2$ to bound $|w_t|$ for all time steps. Combined with Assumption~\ref{ass:smoothness_assumption} we have
\begin{equation}
    \mathbb{P}\left\{|y_t| \leq\, L_f  + \bar{w}_t \right\} \geq 1 - a_0 \exp \{ - L_f^2/ b_0^2\} - \delta/2.
\end{equation}
Solving the remaining term with $\delta / 2$ and using the union bound, we get
$
    \bar{y}'_{t} = L_f + \bar{w}_t  \coloneqq b_0 \sqrt{\ln\frac{2 a_0 \pi_t}{\delta}} + \bar{w}_t.
$
Here, we can further bound $b_0 \sqrt{\ln\frac{2 a_0 \pi_t}{\delta}}$ with $b_0 \sqrt{\ln\frac{2 (a_0+1) \pi_t}{\delta}}$ using the monotonicity of the logarithm  (recall that $a_0 >0$). Since we have $\bar{w}_t = \sqrt{2}\noisevar\sqrt{\ln\frac{2 \pi_t}{\delta}}$, we can use the same trick to state that $\bar{w}_t \leq \sqrt{2}\noisevar\sqrt{\ln\frac{2 (a_0 + 1)\pi_t}{\delta}}$. To end, we can concatenate the terms obtaining $\bar{y}'_{t}\leq \bar{y}_{t} \coloneqq (b_0 + \sqrt{2} \noisevar) \sqrt{\ln\frac{2 (a_0 + 1) \pi_t}{\delta}}$ proving the claim.
\end{proof}

\begin{remark}
    Note that Lemma~\ref{lem:bounded_y} slightly differs from the bound used in \citet{Bogunovic2016}.
    We explicitly consider the variance of the measurement noise to bound the observations in probability.
\end{remark}

For \theo\ref{theorem:regret_bounds}, we condition on four high probability events \ie Lemma~\ref{lem:xt_bound}, Lemma~\ref{lem:x_bound}, Lemma~\ref{lem:derivative}, and Lemma~\ref{lem:bounded_y} each with with $\delta/4$.
Conditioned on these events, the following Lemma holds.

\begin{lemma}[{\citet[][Appendix~D]{Bogunovic2016}}] \label{lem:disc_bound}
Let Lemma~\ref{lem:xt_bound}, Lemma~\ref{lem:x_bound}, and Lemma~\ref{lem:derivative} each hold with probability $1-\delta /4 $. 
Pick $\delta \in (0, 1)$ and set 
\[
\beta_t = 2 \ln\left(\frac{4\pi_t}{\delta}\right) + 2 d \ln{\left(r d t^2 b_1 \sqrt{\ln(4a_1 d \pi_t/ \delta)}\right)}, 
\]
where $\sum_{t \geq 1} \pi_t^{-1} = 1$, $\pi_t >0$. Let $\vartheta_t = r d t^2 b_1 \sqrt{\ln(2 a_1 d \pi_t/ \delta)}$ and let $[\optvar^*]_t$ be the closest point in $\mathbb{X}_t$ to $\optvar^*$. Then we have
\begin{equation}
        \mathbb{P}\left\{| f_t(\optvar_t^*) - \tilde\mu_{\mathcal{D}_t}([\optvar_t^*]_t) | \leq\, \sqrt{\beta_t} \tilde\sigma_{\mathcal{D}_t} ([\optvar_t^*]_t) + \frac{1}{t^2}, \,\,\forall t\geq 1 \right\} \geq 1 - \delta.
    \end{equation}
\end{lemma}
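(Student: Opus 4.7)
The plan is to split the quantity $|f_t(\optvar_t^*) - \tilde\mu_{\mathcal{D}_t}([\optvar_t^*]_t)|$ via the triangle inequality into (a) a \emph{discretization error} $|f_t(\optvar_t^*) - f_t([\optvar_t^*]_t)|$ and (b) a \emph{posterior confidence error} $|f_t([\optvar_t^*]_t) - \tilde\mu_{\mathcal{D}_t}([\optvar_t^*]_t)|$, bound the two pieces separately on two high-probability events, and take a union bound so that the total failure probability is at most $\delta$. The $1/t^2$ slack in the lemma will absorb the discretization error, and the $\sqrt{\beta_t}\tilde\sigma_{\mathcal{D}_t}([\optvar_t^*]_t)$ term will cover the confidence error. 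This mirrors the Srinivas et al. / Bogunovic et al. style of argument.

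\textbf{Handling the discretization error.} By Remark~\ref{rem:constant_prior}, $f_t \sim \GP(0,k)$ for every $t$, so Assumption~\ref{ass:smoothness_assumption} applies uniformly in $t$: with probability at least $1 - a_1 d \exp(-L^2/b_1^2)$, every partial derivative of $f_t$ is bounded in magnitude by $L$ on $\X$ (union bound across the $d$ coordinates). On this event, a mean-value-theorem estimate gives $|f_t(\optvar_t^*) - f_t([\optvar_t^*]_t)| \le L \, r d / \vartheta_t$ since $\|\optvar_t^* - [\optvar_t^*]_t\|_\infty \le rd/\vartheta_t$. I would set $L = b_1 \sqrt{\ln(2 a_1 d \pi_t / \delta)}$ so that the per-$t$ failure probability becomes $\delta/(2\pi_t)$, and then take $\vartheta_t = r d t^2 b_1 \sqrt{\ln(2 a_1 d \pi_t / \delta)} = L r d t^2$, which forces the discretization error to satisfy $L r d / \vartheta_t \le 1/t^2$. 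Summing the failure probabilities over $t$ via $\sum_t \pi_t^{-1} = 1$ gives a total bad-event mass of at most $\delta/2$ for this part.

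\textbf{Handling the posterior confidence error.} Under Assumption~\ref{ass:markov_model}, conditioning on $\mathcal{D}_t$ makes $f_t([\optvar_t^*]_t)$ Gaussian with mean $\tilde\mu_{\mathcal{D}_t}([\optvar_t^*]_t)$ and variance $\tilde\sigma^2_{\mathcal{D}_t}([\optvar_t^*]_t)$ (this is the very motivation for the time-varying posterior updates \eqref{eq:mean_time}, \eqref{eq:var_time}). This is the same setup used to prove Lemma~\ref{lem:xt_bound}, and the identical Gaussian-tail argument yields the pointwise bound $|f_t(\optvar) - \tilde\mu_{\mathcal{D}_t}(\optvar)| \le \sqrt{\beta_t}\tilde\sigma_{\mathcal{D}_t}(\optvar)$ for any fixed $\optvar$, with failure probability driven by the choice of $\beta_t$. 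Since $[\optvar_t^*]_t$ is a random point in $\mathbb{X}_t$, I would apply this pointwise bound at \emph{every} $\optvar \in \mathbb{X}_t$ and union-bound over the $|\mathbb{X}_t| = \vartheta_t^d$ grid points and over $t$ via $\pi_t$. Matching the per-$t$ failure probability to $\delta/(2\pi_t)$ requires $\beta_t = 2\ln(2\pi_t \vartheta_t^d / \delta) = 2\ln(2\pi_t/\delta) + 2d\ln\vartheta_t$, which, after substituting the choice of $\vartheta_t$ above, reproduces exactly the $\beta_t$ in the lemma statement. Another $\delta/2$ of failure mass is consumed here.

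\textbf{Combining and main obstacle.} On the intersection of the two good events (total mass at least $1-\delta$), the triangle inequality gives $|f_t(\optvar_t^*) - \tilde\mu_{\mathcal{D}_t}([\optvar_t^*]_t)| \le 1/t^2 + \sqrt{\beta_t}\tilde\sigma_{\mathcal{D}_t}([\optvar_t^*]_t)$ simultaneously for all $t$, which is the claim. The conceptual steps are all direct; the main obstacle is the bookkeeping: one has to carefully allocate the $\delta$ budget across the gradient bound, the grid union, and the time union, and choose $L$ and $\vartheta_t$ so that the constants in $\beta_t$ come out precisely as in the lemma statement. A secondary subtlety worth checking is that Remark~\ref{rem:constant_prior} legitimately lets Assumption~\ref{ass:smoothness_assumption} be applied at every $t$ despite $f_t$ being non-stationary in $t$; because the marginal law is $\GP(0,k)$ at each $t$, the gradient tail bound holds per $t$, and a union bound over $t$ (already absorbed into the $\pi_t$ allocation) suffices.
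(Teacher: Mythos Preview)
Your proposal is correct and reproduces precisely the argument the paper defers to \citet[Appendix~C.1]{Bogunovic2016}: the triangle-inequality split into a discretization error controlled via Assumption~\ref{ass:smoothness_assumption} (with the per-$t$ gradient bound legitimized by Remark~\ref{rem:constant_prior}) and a posterior confidence error handled by a Gaussian tail plus union bound over the $\vartheta_t^d$ grid points and over $t$ via $\pi_t$. The $\delta$-budgeting and the resulting constants in $\beta_t$ and $\vartheta_t$ match exactly, so there is no methodological difference to report.
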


%%%%%%%%%%%%%%%%%%%%%%%%%%%%%%%%%%%%%%%%
%%%%%%%%%%%%% END LEMMAS %%%%%%%%%%%%%%%
%%%%%%%%%%%%%%%%%%%%%%%%%%%%%%%%%%%%%%%%

In the following, we will explicitly highlight the usage of each Lemma. We first bound the instantaneous regret at a time step $t$ as
\begin{align}
    r_t &= f_t(\optvar_t^*) - f_t(\optvar_t) \\
    &\leq\, \tilde\mu_{\mathcal{D}_t}([\optvar_t^*]_t) + \sqrt{\beta_t} \tilde\sigma_{\mathcal{D}_t}([\optvar_t^*]_t)  + \frac{1}{t^2} - \tilde\mu_{\mathcal{D}_t}(\optvar_t) + \sqrt{\beta_t} \tilde\sigma_{\mathcal{D}_t}(\optvar_t). \quad \text{(Lemma~\ref{lem:xt_bound}, Lemma~\ref{lem:disc_bound})}
\end{align}
This is identical to \citet[][Proof of Theorem 4.2]{Bogunovic2016} for R-GP-UCB.
We now account for model mismatch between using the time-invariant model in the algorithm and the ``true" time-varying posterior as 
\begin{align}
    \tilde\mu_{\mathcal{D}_t}(\optvar) &\leq\, \mu_{\mathcal{D}_t}(\optvar) + \Delta_{\mathcal{D}_t}^{(\mu)}\label{eq:delta_mu}\qquad \text{with} \quad \Delta_{\mathcal{D}_t}^{(\mu)} \coloneqq \sup_{\optvar \in \mathbb{X}}\left\{|\tilde\mu_{\mathcal{D}_t}(\optvar) - \mu_{\mathcal{D}_t}(\optvar)|\right\} \\
    \tilde\sigma_{\mathcal{D}_t}(\optvar) &\leq\, \sigma_{\mathcal{D}_t}(\optvar) + \Delta_{\mathcal{D}_t}^{(\sigma)} \label{eq:delta_sigma}\qquad \text{with} \quad \Delta_{\mathcal{D}_t}^{(\sigma)}\coloneqq \sup_{\optvar \in \mathbb{X}}\left\{|\tilde\sigma_{\mathcal{D}_t}(\optvar) - \sigma_{\mathcal{D}_t}(\optvar)|\right\}.
\end{align}
We can thus further bound the regret as
\begin{align}
    r_t \leq\,\, & \mu_{\mathcal{D}_t}([\optvar_t^*]_t) + \Delta_{\mathcal{D}_t}^{(\mu)}([\optvar_t^*]_t) + \sqrt{\beta_t} \left(\sigma_{\mathcal{D}_t}([\optvar_t^*]_t) + \Delta_{\mathcal{D}_t}^{(\sigma)}([\optvar_t^*]_t)\right)\\
    &- \mu_{\mathcal{D}_t}(\optvar_t) + \Delta_{\mathcal{D}_t}^{(\mu)}(\optvar_t) + \sqrt{\beta_t} \left(\sigma_{\mathcal{D}_t}(\optvar_t) + \Delta_{\mathcal{D}_t}^{(\sigma)}(\optvar_t)\right)+ \frac{1}{t^2}.
\end{align}
Per definition of Algorithm~\ref{alg:algorithm}, we have that through the optimization of its UCB-type acquisition function $\mu_{\mathcal{D}_t}([\optvar_t^*]_t) + \sqrt{\beta_t}\sigma_{\mathcal{D}_t}([\optvar_t^*]_t) \leq\, \mu_{\mathcal{D}_t}(\optvar_t) + \sqrt{\beta_t}\sigma_{\mathcal{D}_t}(\optvar_t)$, and, therefore,
\begin{align}
    r_t \leq\, & 2 \sqrt{\beta_t} \sigma_{\mathcal{D}_t}(\optvar_t) + \frac{1}{t^2}  + \Delta_{\mathcal{D}_t}^{(\mu)}([\optvar_t^*]_t) + \Delta_{\mathcal{D}_t}^{(\mu)}(\optvar_t) + \sqrt{\beta_t} \left( \Delta_{\mathcal{D}_t}^{(\sigma)}([\optvar_t^*]_t) + \Delta_{\mathcal{D}_t}^{(\sigma)}(\optvar_t) \right). \label{eq:unbounded_regret}
\end{align}
What is left to bound is the model mismatch.\footnote{We noticed a minor mistake in \citet[Theorem~4.2]{Bogunovic2016}. There, it should be $2\cdot T\psi_T(N,\epsilon)$ in Eq.~(17) as model differences in mean and variance have to be bounded twice as it is the case for the variance in our proof (\cf \citet[Proof of Theorem~4.2]{Bogunovic2016}, Eq.~(72)). 
There are no consequences from the missing constant but it might be helpful for future work to state it here explicitly.}
For this, we slightly adjust the bounds from \citet[][Lemma~D.1.]{Bogunovic2016} to the setting where the block size can vary but is upper bounded by $\bar{N}$.
\begin{lemma}[Adjusted from {\citet[Lemma~D.1.]{Bogunovic2016}}]\label{lem:improved_bounds}
    Conditioned on the event in Lemma~\ref{lem:bounded_y}, we have for any block size $N \leq \bar{N}$ that the maximum model mismatch in the mean and variance between the time-varying GP model and the time-invariant GP model is a.s. bounded as
    \begin{align}
        \Delta_{\mathcal{D}_t}^{(\sigma)} &\leq \sqrt{\left(3\noisevar^{-2} + \noisevar^{-4}\right) \bar{N}^3 \varepsilon} \label{eq:bound_var}\\
        \Delta_{\mathcal{D}_t}^{(\mu)} &\leq \left(\noisevar^{-2} + \noisevar^{-4}\right) \bar{N}^3 \varepsilon \bar{y}_{t}. \label{eq:bound_mean}
    \end{align}
\end{lemma}
\begin{proof}
    For establishing the bounds on the model mismatch, we refer to the proof of \citet[Lemma~D.1.]{Bogunovic2016}. 
    If focuses on bounding \eqref{eq:delta_mu} and \eqref{eq:delta_sigma} by explicitly bounding the difference in posterior distributions of the time-invariant updates in \eqref{eq:var} and time-varying updates \eqref{eq:var_time}.
    The bounds in \eqref{eq:bound_var} and \eqref{eq:bound_mean} are monotonic functions in the block length. 
    Hence, the model mismatch is upper bounded by the maximum model mismatch with $N = \bar{N}$.
\end{proof}

With Lemma~\ref{lem:improved_bounds}, we can summarize the bound on the model mismatch at each time step as
\begin{align}
    2\Delta_{\mathcal{D}_t}^{(\mu)} + 2\sqrt{\beta_t} \Delta_{\mathcal{D}_t}^{(\sigma)}
    \leq\,  2\left(\noisevar^{-2} + \noisevar^{-4}\right) \bar{N}^3 \varepsilon \bar{y}_{t} +  2 \sqrt{\beta_t\left(3\noisevar^{-2} + \noisevar^{-4}\right) \bar{N}^3 \varepsilon} \eqqcolon \phi_t(\varepsilon, \bar{N}).
\end{align}

For completeness, we follow the steps in \citet[Proof of Theorem~2]{Srinivas2010} and \citet[Proof of Theorem~4.2 (R-GP-UCB)]{Bogunovic2016} to bound the regret following \defn\ref{def:regret} as
\begin{equation}
    R_T \coloneqq \sum_{t=1}^T r_t \leq\, \sum_{t=1}^T 2 \sqrt{\beta_t} \sigma_{\mathcal{D}_t}(\optvar_t) + \frac{1}{t^2} + \phi_t(\varepsilon, \bar{N}) \label{eq:regret1}.
\end{equation}
With $\sum_{t=1}^\infty 1/t^2 = \pi^2/6 \leq\, 2$ and using Cauchy-Schwarz with $\left(\sum_{t=1}^T \optvar_t\right)^2 \leq\, T \sum_{t=1}^T \optvar_t^2$ we obtain
\begin{equation}
    R_T \leq\, \sqrt{T \sum_{t=1}^T 4 \beta_t \sigma_{\mathcal{D}_t}^2(\optvar_t)} + 2 + \sum_{t=1}^T \left( 2\left(\noisevar^{-2} + \noisevar^{-4}\right) \bar{N}^3 \varepsilon \bar{y}_{t} +  2 \sqrt{\beta_t\left(3\noisevar^{-2} + \noisevar^{-4}\right) \bar{N}^3 \varepsilon} \right). \label{eq:regret2}
\end{equation}
Furthermore, using the same arguments as in \citet[Proof of Theorem~4.2 (R-GP-UCB)]{Bogunovic2016} and \citet[][Proof of Lemma~5.3 (GP-UCB)]{Srinivas2010} on the information gain within a block of size $M$ is $I(y_M; f_M) = \frac{1}{2}\sum_{t=1}^M \ln\left(1 + \noisevar^{-2} \sigma_{\mathcal{D}_t}^2(\optvar_t) \right) \leq\, \gamma_M$ with $\gamma_M = \max_{\optvar_1, \dots \optvar_M} I(y_M; f_M)$ yields
\begin{align}
    \sum_{t=1}^T 4 \beta_t \sigma_{\mathcal{D}_t}^2(\optvar_t) &\leq\, 4 \beta_T \noisevar^2 \sum_{t=1}^T \left(\noisevar^{-2} \sigma_{\mathcal{D}_t}^2(\optvar_t) \right) \\
    &\leq\, 4 \beta_T \noisevar^2  C_2 \sum_{t=1}^T \ln \left(1+\noisevar^{-2} \sigma_{\mathcal{D}_t}^2(\optvar_t) \right) \\
    \intertext{with $C_2 = \noisevar^{-2} / \ln(1 + \noisevar^{-2}) \geq 1$, since $s^2 \leq\, C_2 \ln(1 + s^2)$ for $s \in [0, \noisevar^2]$, and $\noisevar^{-2}\sigma_{\mathcal{D}_t}^2(\optvar_t) \leq\, \noisevar^{-2}k(\optvar_t,\optvar_t)\leq\, \noisevar^{-2}$ due to the bounded kernel function in Assumption~\ref{ass:markov_model} (also see \citet[][Proof of Lemma~5.4]{Srinivas2010}, we get}
    \sum_{t=1}^T 4 \beta_t \sigma_{\mathcal{D}_t}^2(\optvar_t) &\leq\, 8 \beta_T \noisevar^2 C_2 \frac{T}{\ubar{N}} I(y_{\bar{N}}; f_{\bar{N}}) \leq \, 8 \beta_T \noisevar^2 C_2 \frac{T}{\ubar{N}} \gamma_{\bar{N}}.
\end{align}
Here, the lower reset bound $\ubar{N}$ becomes relevant with the following argument.
With block lengths as $N \in [\ubar{N}, \bar{N}]$, the maximum number of blocks one could obtain until $T$ is bounded by $\frac{T}{\ubar{N}}$ (assuming this is an integer for now).
In each of the blocks, the maximum information gain is bounded by $\gamma_{\bar{N}}$ since the information gain is non decreasing.
We assumed that $\frac{T}{\ubar{N}}$ is an integer as the number of block in $T$ and we can upper this with $\big(\frac{T}{\ubar{N}} + 1 \big)$. We therefore have
\begin{equation}
    R_T \leq\, \sqrt{C_1 T \beta_T \left( \frac{T}{\ubar{N}} + 1\right)\gamma_{\bar{N}}} + 2 + \sum_{t=1}^T 2\left(\noisevar^{-2} + \noisevar^{-4}\right) \bar{N}^3 \varepsilon \bar{y}_{t} +  2 \sqrt{\beta_t\left(3\noisevar^{-2} + \noisevar^{-4}\right) \bar{N}^3 \varepsilon} \label{eq:regret3}
\end{equation}
with $C_1 = 8/\ln(1+\noisevar^{-2})$. Further bounding the sum over maximum model mismatch yields
\begin{align}
    R_T &\leq\, \sqrt{C_1 T \beta_T \left( \frac{T}{\ubar{N}} + 1\right)\gamma_{\bar{N}}} + 2 + T \left(\underbrace{2\left(\noisevar^{-2} + \noisevar^{-4}\right) \bar{N}^3 \varepsilon \bar{y}_{T} +  2\sqrt{\beta_T\left(3\noisevar^{-2} + \noisevar^{-4}\right) \bar{N}^3 \varepsilon}}_{\phi_T(\varepsilon, \bar{N}) \text{ in Theorem~\ref{theorem:regret_bounds}}}\right). \nonumber\label{eq:model_mismatch3}
\end{align}
\end{proof}

\subsection{Proof of Corollary~\ref{coro:scaling}}

\begin{repcorollary}{coro:scaling}
    Let the assumptions in Theorem~\ref{theorem:regret_bounds} hold. Given a lower and upper bound on the true $\varepsilon$ such that $\varepsilon \in [\ubar{\varepsilon}, \bar{\varepsilon}] \subset (0, 1)$, there exist $\lambda_1 \in (0, 1]$ and $\lambda_2 \in [1, \frac{1}{\varepsilon})$ such that $\ubar{\varepsilon} = \lambda_1 \cdot \varepsilon$ and $\bar{\varepsilon} = \lambda_2 \cdot \varepsilon$.
    For the \underline{SE kernel}, set $\ubar{N} = \Theta(\min \{T, \bar{\varepsilon}^{-1/4}\})$ and $\bar{N} = \Theta(\min \{T, \ubar{\varepsilon}^{-1/4}\})$. Then,
    \begin{align}
        R_T &= \bigO\left(\max\left\{  \sqrt{T}, \, \lambda_2^{1/8} \varepsilon^{1/8}_{\phantom{1}}T, \, \lambda_1^{-3/8}\varepsilon^{1/8}_{\phantom{1}} T, \, \lambda_1^{-3/4}\varepsilon^{1/4}_{\phantom{1}}T\right\} \right). 
    \end{align}
    For the \underline{Mat\`ern kernel ($\nu > 2$)}, define $\xi = \frac{d(d+1)}{2\nu + d(d+1)}$ and set $\ubar{N} = \Theta(\min \{T, \bar{\varepsilon}^{-1/{(4-\xi)}}\})$ and $\bar{N} = \Theta(\min \{T, \ubar{\varepsilon}^{-1/(4-\xi)}\})$. Then, 
    \begin{align}
        R_T &= \bigO\left(\max\left\{  \sqrt{T^{1-\xi}}, \, \lambda_1^{-\frac{\xi}{2(4-\xi)}}\lambda_2^{\frac{1}{2(4-\xi)}} \varepsilon^{\frac{1-\xi}{2(4-\xi)}}_{\phantom{1}}T, \, \lambda_1^{-\frac{3}{2(4-\xi)}} \varepsilon^{\frac{1-\xi}{2(4-\xi)}}_{\phantom{1}}T, \, \lambda_1^{-\frac{3}{4-\xi}}\varepsilon^{\frac{1-\xi}{4-\xi}}_{\phantom{1}}T\right\} \right).
    \end{align}
\end{repcorollary}

\begin{proof}
First recall the bound from Theorem~\ref{theorem:regret_bounds} and divide it into three terms to increase readability.
\begin{align}
    R_T \leq\, \underbrace{\sqrt{C_1 T \beta_T \left( \frac{T}{\ubar{N}} + 1\right)\gamma_{\bar{N}}}}_{\text{Term 1}} + 2 + \underbrace{\vphantom{\sqrt{C_1 T \beta_T \left( \frac{T}{\ubar{N}} + 1\right)\gamma_{\bar{N}}}}2T\sqrt{\beta_T\left(3\noisevar^{-2} + \noisevar^{-4}\right) \bar{N}^3 \varepsilon}}_{\text{Term 2}} +  \underbrace{2T\left(\noisevar^{-2} + \noisevar^{-4}\right) \bar{N}^3 \varepsilon \bar{y}_{T}\vphantom{\sqrt{C_1 T \beta_T \left( \frac{T}{\ubar{N}} + 1\right)\gamma_{\bar{N}}}}}_{\text{Term 3}}\nonumber
\end{align}
In the following, we will first consider the SE kernel and afterwards the general Mat\'ern kernel. We will bound each term separately for both kernels.
The proof builds on \citet[Appendix~E]{Bogunovic2016} but extends it to the setting where only lower and upper bounds on $\varepsilon$ are known.
In the following, let $\mathcal{I}_T (z)$ be the closest integer in $\{1, \dots, T\}$ which is closest to $z \in \R$.

\subsubsection{Squared Exponential Kernel}

\paragraph{Term 1}
For the SE kernel, \citet[Theorem~5]{Srinivas2010} shows that $\gamma_{\bar{N}} = \mathcal{O}(\ln \bar{N}) = \bigO(1)$. 
Setting $\ubar{N} = \mathcal{I}_T (\bar{\varepsilon}^{-1/4})$, we obtain upon substitution
\begin{align}
    \sqrt{C_1 T \beta_T \left( \frac{T}{\ubar{N}} + 1\right)\gamma_{\bar{N}}} = \bigO\left(\sqrt{T\frac{T}{\bar{\varepsilon}^{-1/4}}}\right) = \bigO\left(T\sqrt{\frac{1}{ {(\lambda_2 \varepsilon)}^{-1/4}}}\right) = \bigO\left(\lambda_2^{1/8} \varepsilon^{1/8}_{\phantom{1}}T\right).
\end{align}

\paragraph{Term 2}
For the second term, we have upon substitution of $\bar{N} = \mathcal{I}_T (\ubar{\varepsilon}^{-1/4})$
\begin{align}
    2T\sqrt{\beta_T\left(3\noisevar^{-2} + \noisevar^{-4}\right) \bar{N}^3 \varepsilon} = \bigO\left( T \sqrt{ \ubar{\varepsilon}^{-3/4} \varepsilon} \right) = \bigO\left( T \sqrt{ {(\lambda_1 \varepsilon)}^{-3/4} \varepsilon} \right) = \bigO\left( \lambda_1^{-3/8}\varepsilon^{1/8}_{\phantom{1}} T\right).
\end{align}

\paragraph{Term 3}
For the third term, we have upon substitution of $\bar{N} = \mathcal{I}_T ( \ubar{\varepsilon}^{-1/4})$
\begin{align}
    2T\left(\noisevar^{-2} + \noisevar^{-4}\right) \bar{N}^3 \varepsilon \bar{y}_{T} = \bigO\left( T \ubar{\varepsilon}^{-3/4} \varepsilon \right) = \bigO\left( T {(\lambda_1 \varepsilon)}^{-3/4} \varepsilon \right) = \bigO\left( \lambda_1^{-3/4}\varepsilon^{1/4}_{\phantom{1}} T\right).
\end{align}

From these results, we can further see that for the scaling to be sub-linear $\varepsilon$ has to be sufficiently small relative to the time horizon $T$ and the lower and upper bounds on $\varepsilon$ have to be sufficiently close. Specifically, if $\varepsilon < \min \left\{ \frac{1}{\lambda_2 T^4}, \, \frac{\lambda_1^3}{T^4} \right\}$ we obtain sub-linear scaling, \ie $\bigO(\sqrt{T})$.

\subsubsection{Mat\'ern Kernel \texorpdfstring{($\nu > 2$)}{}}

\paragraph{Term 1}
For the Mat\'ern Kernel kernel, \citet[Theorem~5]{Srinivas2010} shows that $\gamma_{\bar{N}} = \mathcal{O}(\bar{N}^\xi \ln \bar{N}) = \bigO(\bar{N}^\xi)$ where $\xi = \frac{d(d+1)}{2\nu + d(d+1)}$. Setting $\ubar{N} =\mathcal{I}_T (\bar{\varepsilon}^{-1/{(4-\xi)}})$ and $\bar{N} = \mathcal{I}_T (\ubar{\varepsilon}^{-1/(4-\xi)} )$, we have upon substitution
\begin{align}
    \sqrt{C_1 T \beta_T \left( \frac{T}{\ubar{N}} + 1\right)\gamma_{\bar{N}}} &= \bigO\left(\sqrt{T\frac{T}{\bar{\varepsilon}^{-1/{(4-\xi)}}} \ubar{\varepsilon}^{-\xi/(4-\xi)}}\right) \\
    &= \bigO\left(T\sqrt{\frac{1}{ {(\lambda_2 \varepsilon)}^{-1/{(4-\xi)}}} (\lambda_1 \varepsilon)^{-\xi/{(4-\xi)}}}\right) \\
    &= \bigO \left( \lambda_1^{-\frac{\xi}{2(4-\xi)}}\lambda_2^{\frac{1}{2(4-\xi)}} \varepsilon^{\frac{1-\xi}{2(4-\xi)}}_{\phantom{1}}T \right).
\end{align}

\paragraph{Term 2}
For the second term, we have upon substitution of $\bar{N} = \mathcal{I}_T (\ubar{\varepsilon}^{-1/(4-\xi)} )$
\begin{align}
    2T\sqrt{\beta_T\left(3\noisevar^{-2} + \noisevar^{-4}\right) \bar{N}^3 \varepsilon} &= \bigO\left( T \sqrt{ \ubar{\varepsilon}^{-3/(4-\xi)} \varepsilon} \right) \\
    &= \bigO\left( T \sqrt{ {(\lambda_1 \varepsilon)}^{-3/(4-\xi)} \varepsilon} \right) \\
    &= \bigO\left( \lambda_1^{-\frac{3}{2(4-\xi)}} \varepsilon^{\frac{1-\xi}{2(4-\xi)}}_{\phantom{1}}T\right).
\end{align}

\paragraph{Term 3}
For the third term, we have upon substitution of $\bar{N} = \mathcal{I}_T (\ubar{\varepsilon}^{-1/(4-\xi)} )$
\begin{align}
    2T\left(\noisevar^{-2} + \noisevar^{-4}\right) \bar{N}^3 \varepsilon \bar{y}_{T} &= \bigO\left( T \ubar{\varepsilon}^{-3/(4-\xi)} \varepsilon \right) \\
    &= \bigO\left( T {(\lambda_1 \varepsilon)}^{-3/(4-\xi)} \varepsilon \right) \\
    &= \bigO\left( \lambda_1^{-\frac{3}{4-\xi}}\varepsilon^{\frac{1-\xi}{4-\xi}}_{\phantom{1}}T\right).
\end{align}

Similar to the SE kernel, we can also identify the sub-linear region for the Mat\'ern kernel. Specifically, if $\varepsilon < \min \left\{ \frac{\lambda_1^\xi}{\lambda_2^{1-\xi} T^{4-\xi}}, \, \frac{\lambda_1^{3(1-\xi)}}{ T^{4-\xi}} \right\}$ we obtain sub-linear scaling, \ie $\bigO(\sqrt{T^{1-\xi}})$.
We can furthermore nicely observe that for $\nu \to \infty$ (therefore $\xi \to 0$), the bounds for Terms 1-3 as well as the sub-linear region converges to the bounds of the SE kernel.
\end{proof}

\section{Proof of Lemma~\ref{lem:bounded_noise} and Lemma~\ref{lem:trigger_bound}} \label{sec:lemma3}

\begin{replemma}{lem:bounded_noise}
    Pick $\delta \in (0,1)$ and set $\bar{w}_{\triggerT}^2 = {2\noisevar^2 \ln \frac{\pi_{\triggerT}}{\delta}}$, where $\sum_{{\triggerT} \geq 1} \pi_{\triggerT}^{-1} = 1$, $\pi_{\triggerT} >0$. Then, the noise sequence in Assumption~\ref{ass:noisy_obs} obtained since the last reset satisfies ${|w_{\triggerT}| \leq \bar{w}_{\triggerT}}$ for all ${\triggerT} \geq 1$ 
    with probability at least $1 - \delta$.
\end{replemma}

\begin{proof}
Since $w_{\triggerT} \sim \mathcal{N}(0, \noisevar^2)$, we can use a standard bound on the tail probability of the normal distribution $\mathbb{P}\left\{|r| > s\right\} \leq \exp \left\{ - s^2/(2 \sigma^2)\right\}$ with $r\sim\mathcal{N}(0, \sigma^2)$ to obtain $\mathbb{P}\left\{|w_{\triggerT}| \leq \bar{w}_{\triggerT}\right\} \geq 1 - \exp \left\{- \bar{w}_{\triggerT}^2 / (2 \noisevar^2)\right\}$.
Solving for $\bar{w}_{t}$ with $\delta/\pi_{\triggerT}$ using the same arguments as in the proof of \citet[Lemma~5.5]{Srinivas2010} and taking the union bound over all time steps yields the results.
\end{proof}

\begin{replemma}{lem:trigger_bound}
    Let $f_t(\optvar)$ follow Assumption~\ref{ass:markov_model} with $\varepsilon =0$. Pick $\triggerDelta \in (0, 1)$ and set $\rho_{\triggerT} = 2\ln{\frac{2\pi_{\triggerT}}{\triggerDelta}}$, where $\sum_{{\triggerT} \geq 1} \pi_{\triggerT}^{-1} = 1$, $\pi_{\triggerT} >0$. Also set $\bar{w}_{\triggerT}^2 = {2\noisevar^2 \ln \frac{2\pi_{\triggerT}}{\triggerDelta}}$. Then, observations $y_t$ under Assumption~\ref{ass:noisy_obs} satisfy $| y_t - \mu_{\mathcal{D}_t}(\optvar_t) | \leq \sqrt{\rho_{\triggerT}} \sigma_{\mathcal{D}_t} (\optvar_t) + \bar{w}_{\triggerT}$
    for all $\triggerT \geq 1$ with probability at least $1 - \triggerDelta$.
\end{replemma}

\begin{proof}
From Lemma~\ref{lem:srinivas} with $\triggerDelta/2$ and incorporating the noise, we have
\begin{align}
    | f_t(\optvar_t) + w_{\triggerT} - \mu_{\mathcal{D}_t}(\optvar_t) | &\leq \sqrt{\rho_{\triggerT}} \sigma_{\mathcal{D}_t} (\optvar_t) + |w_{\triggerT}|  \nonumber \\
    | y_t - \mu_{\mathcal{D}_t}(\optvar_t) | &\leq \sqrt{\rho_{\triggerT}} \sigma_{\mathcal{D}_t} (\optvar_t) + |w_{\triggerT}|.
\end{align}
with $\rho_{\triggerT} = 2\ln{ \frac{2\pi_{\triggerT}}{\triggerDelta}}$. Using Lemma~\ref{lem:bounded_noise} with $\triggerDelta/2$ and taking the union bound yields the result.
\end{proof}
\section{Hyperparameters} \label{sec:hypers}

To ensure the reproducibility of our results, we give a list of all hyperparameters used in any simulation performed for this paper. All experiments in Section~\ref{sec:empirical_experiments} were conducted on a 2021 MacBook Pro with an Apple M1 Pro chip and 16GB RAM. The Monte Carlo simulations in Section~\ref{sec:method} were conducted over 3 days on a compute cluster. The temperature data set is publicly available at \url{http://db.csail.mit.edu/labdata/labdata.html}. The code will be published upon acceptance and is also part of the supplementary material of the submission.

\begin{table}[h!]
\begin{center}
% \begin{sc}
\caption{Hyperparameters of the Monte Carlo Simulation for \algname (\cf \fig\ref{fig:stopping_time}).}
\begin{tabular}{l c}
\toprule
\textbf{Hyperparameters} &  \textbf{Monte Carlo Simulation}\\
\midrule
dimension $d$ & $2$\\
compact set $\X$ & $[0, 1]^2$ \\
lengthscales $l$ & $0.2$\\
time horizon $T$ & $200$\\
noise variance $\noisevar^2$ & $0.02$\\
number of i.i.d. runs & $30\, 000$\\
event trigger parameter in Lemma~\ref{lem:trigger_bound} $\triggerDelta$ & $0.1$\\
rate of change $\varepsilon$ & $\{ 0.03, 0.1 \}$\\
\bottomrule
\end{tabular}
% \end{sc}
\end{center}
\label{tab:mc}
\end{table}
%
%%%%%%%%%%%%%%%%%%%%%%%%%%%%%%%
%%%%%%%%%%%%% WM %%%%%%%%%%%%%
\begin{table}[h]
\begin{center}
% \begin{sc}
\caption{Hyperparameters for the within-model comparisons (\cf \fig\ref{fig:within_model}).}
\begin{tabular}{c l c c}
\toprule
\textbf{Algorithm} & \textbf{Hyperparameters} &  \textbf{Correct rate of change} &  \textbf{Miss. rate of change}\\
\midrule
&dimension $d$ & $2$ & $2$\\
&compact set $\X$ & $[0, 1]^2$ & $[0, 1]^2$\\
&lengthscales $l$ & $0.2$ & $0.2$\\
all &time horizon $T$ & $400$ & $400$\\
&noise variance $\noisevar^2$ & $0.02$ & $0.02$\\
&$\beta_t$ approximation parameters & $c_1=0.4, c_2=4$ & $c_1=0.4, c_2=4$\\
&number of i.i.d. runs & $50$ & $50$\\
\midrule
UI-TVBO &rate of change $\hat{\sigma}_{\mathrm{w}}^2$ & $\{0.01, 0.03, 0.05\}$ & $\{0.001, 0.2\}$\\
\midrule
TV-GP-UCB &rate of change $\varepsilon$ & $\{0.01, 0.03, 0.05\}$ & $\{0.001, 0.2\}$\\
\midrule
R-GP-UCB & reset time $N_{\text{const}}$ & $\{38, 29, 26\}$ & $\{68, 17\}$\\
\midrule
\algname & event trigger parameter $\triggerDelta$ & $0.1$ & $0.1$\\
\bottomrule
\end{tabular}
% \end{sc}
\end{center}
\end{table}
%
%%%%%%%%%%%%%%%%%%%%%%%%%%%%%%%
%%%%%%%%%%%%% TEMP %%%%%%%%%%%%%
\begin{table}[h!]
\begin{center}
% \begin{sc}
\caption{Hyperparameters for the temperature dataset (\cf \fig\ref{fig:temperature_results}).}
\begin{tabular}{c l c }
\toprule
\textbf{Algorithm} & \textbf{Hyperparameters} &  \textbf{Temperature Dataset}\\
\midrule
&dimension $d$ & $1$ \\
&compact set $\X$ & 46 arms \\
&kernel $k$ & empirical kernel \\
all &time horizon $T$ & $286$\\
&noise variance $\noisevar^2$ & $0.01$\\
&$\beta_t$ approximation parameters & $c_1=0.8, c_2=0.4$ \\
&number of i.i.d. runs & $50$\\
\midrule
UI-TVBO & rate of change $\hat{\sigma}_{\mathrm{w}}^2$ & $0.03$ \\
\midrule
TV-GP-UCB &rate of change $\varepsilon$ (see \citet{Bogunovic2016}) & $0.03$ \\
\midrule
R-GP-UCB & reset time $N_{\text{const}}$ (see \citet{Bogunovic2016}) & $15$\\
\midrule
\algname & event trigger parameter in Lemma~\ref{lem:trigger_bound} $\triggerDelta$ & $0.1$\\
\bottomrule
\end{tabular}
% \end{sc}
\end{center}
\end{table}
%%%%%%%%%%%%%%%%%%%%%%%%%%%%%%%
%%%%%%%%%%%%% LQR %%%%%%%%%%%%%
\begin{table}[h!]
\begin{center}
% \begin{sc}
\caption{Hyperparameters for the policy search experiment (\cf \fig\ref{fig:temperature_results}~(c)).}
\begin{tabular}{c l c }
\toprule
\textbf{Algorithm} & \textbf{Hyperparameters} &  \textbf{Policy search}\\
\midrule
&dimension $d$ & $4$ \\
& & lower bound to $[-3, -6, -50, -4]$ \\
&compact set $\X$ & upper bound to $[-2, -4, -25, -2]$ \\
& & scaling factors $[1 / 8, 1 / 4, 3, 1 / 8]$ \\
&lengthscales $l$ & gamma prior $\mathcal{G}(15,10/3)$ \\
all &time horizon $T$ & $300$\\
&noise variance $\noisevar^2$ & $0.02$\\
&$\beta_t$ approximation parameters & $c_1=0.8, c_2=4$ \\
&number of i.i.d. runs & $50$\\
\midrule
UI-TVBO & rate of change $\hat{\sigma}_{\mathrm{w}}^2$ & $0.03$ \\
\midrule
TV-GP-UCB &rate of change $\varepsilon$ & $0.03$ \\
\midrule
R-GP-UCB & reset time $N_{\text{const}}$ induced by rate of change $\varepsilon$ & $29$\\
\midrule
UI-TVBO &rate of change $\hat{\sigma}_{\mathrm{w}}^2$ & $0.03$ \\
\midrule
\algname & event trigger parameter in Lemma~\ref{lem:trigger_bound} $\triggerDelta$ & $0.1$\\
\bottomrule
\end{tabular}
% \end{sc}
\end{center}
\label{tab:lqr}
\end{table}
%%%
\begin{table}[h!]
\begin{center}
% \begin{sc}
\caption{Ablation on Impact of Reset Frequency vs. Timing (\cf \fig\ref{fig:ablation}).}
\begin{tabular}{c l c }
\toprule
\textbf{Algorithm} & \textbf{Hyperparameters} &  \textbf{Sensitivity Analysis}\\
\midrule
&dimension $d$ & $2$ \\
&compact set $\X$ & $[0, 1]^2$ \\
&kernel $k$ & fixed, $l = 0.2$ \\
all &time horizon $T$ & $400$\\
&noise variance $\noisevar^2$ & $0.02$\\
&$\beta_t$ approximation parameters & $c_1=0.4, c_2=4$ \\
&number of i.i.d. runs & $50$\\
\midrule
R-GP-UCB & reset time $N_{\text{const}}$ & $\{38, 29, 26\}$\\
\midrule
R-GP-UCB* & reset time $N_{\text{const}}$ as average from \algname & $\{100, 45, 32\}$\\
\midrule
\algname & event trigger parameter in Lemma~\ref{lem:trigger_bound} $\triggerDelta$ & $\{0.1, 0.1, 0.1\}$\\
\bottomrule
\end{tabular}
% \end{sc}
\end{center}
\label{tab:ablation_hypers}
\end{table}
%%%%%%%%%%%%%%%%%%%%%%%%%%%%%%%
%%%%%%%%%%%%% Deng et al. %%%%%%%%%%%%%
\begin{table}[h!]
\begin{center}
% \begin{sc}
\caption{Hyperparameters for the synthetic experiments in \citet{Deng2022}.}
\begin{tabular}{c l c }
\toprule
\textbf{Algorithm} & \textbf{Hyperparameters} &  \textbf{Synthetic Experiments}\\
\midrule
&dimension $d$ & $1$ \\
&compact set $\X$ & $[0, 1]^2$ \\
&kernel $k$ & fixed, $l = 0.2$ \\
&time horizon $T$ & $500$\\
&noise variance $\noisevar^2$ & $\{0.01, 1\}$\\
&number of i.i.d. runs & $20$\\
\midrule
\algname & event trigger parameter in Lemma~\ref{lem:trigger_bound} $\triggerDelta$ & $0.1$\\
\bottomrule
\end{tabular}
% \end{sc}
\end{center}
\label{tab:deng_hypers}
\end{table}
%%%%%%%%%%%%%%%%%%%%%%%%%%%%%%%
%%%%%%%%%%%%% Deng et al. %%%%%%%%%%%%%
\begin{table}[h!]
\begin{center}
% \begin{sc}
\caption{Hyperparameters for the stock market data experiment in \citet{Deng2022}.}
\begin{tabular}{c l c }
\toprule
\textbf{Algorithm} & \textbf{Hyperparameters} &  \textbf{Stock Market Data}\\
\midrule
&dimension $d$ & $1$ \\
&compact set $\X$ & $48$ arms \\
&kernel $k$ & empirical kernel \\
&time horizon $T$ & $823$\\
&noise variance $\noisevar^2$ & $\{0.01, 300 \}$\\
&number of i.i.d. runs & $20$\\
\midrule
\algname & event trigger parameter in Lemma~\ref{lem:trigger_bound} $\triggerDelta$ & $0.1$\\
\bottomrule
\end{tabular}
% \end{sc}
\end{center}
\label{tab:deng_hypers2}
\end{table}
%%%%%%%%%%%%%%%%%%%%%%%%%%%%%%%
%%%%%%%%%%%%% Sensitivity %%%%%%%%%%%%%
\begin{table}[h!]
\begin{center}
% \begin{sc}
\caption{Hyperparameters for the sensitivity analysis (\cf Table~\ref{tab:sensitivity_delta}).}
\label{tab:sensitivity_hypers}
\begin{tabular}{c l c }
\toprule
\textbf{Algorithm} & \textbf{Hyperparameters} &  \textbf{Sensitivity Analysis}\\
\midrule
&dimension $d$ & $2$ \\
&compact set $\X$ & $[0, 1]^2$ \\
&kernel $k$ & fixed, $l = 0.2$ \\
all &time horizon $T$ & $400$\\
&noise variance $\noisevar^2$ & $0.02$\\
&$\beta_t$ approximation parameters & $c_1=0.4, c_2=4$ \\
&number of i.i.d. runs & $50$\\
\midrule
\algname & event trigger parameter in Lemma~\ref{lem:trigger_bound} $\triggerDelta$ & $\{0.005, 0.01, 0.05, 0.1, 0.5\}$\\
\bottomrule
\end{tabular}
% \end{sc}
\end{center}
\end{table}

\end{document}